\newcommand{\argmin}{\operatornamewithlimits{argmin}}
\newcommand{\argmax}{\operatornamewithlimits{argmax}}
\def\distance{\mathsf{d}}
\def\dd{\textnormal{d}}
\def\to{\rightarrow}
\def\was{\textnormal{W}}
\def\norm #1{\Vert #1 \Vert}
\def\abs #1{\vert#1\vert}
\def\pspace{\mathscr{P}}
\newcommand{\xx}{\mathcal{X}}
\newcommand{\hh}{\mathcal{H}}
\newcommand{\ff}{\mathcal{F}}
\newcommand{\yy}{\mathcal{Y}}
\newcommand{\hy}{\mathcal{H}_{\mathcal{Y}}}
\newcommand{\X}{X}
\newcommand{\R}{\mathbb{R}}
\def\pspace{\mathcal{P}}
\newcommand{\N}{\mathbb{N}}
\newcommand{\B}{{\mathcal{B}}}
\newcommand{\diag}{\ensuremath{\text{\rm diag}}}
\newcommand{\msf}[1]{{\mathsf{#1}}}
\newcommand{\proj}{{\msf P}}
\newcommand{\eqals}[1]{\begin{align*}#1\end{align*}}
\newcommand{\eqal}[1]{\begin{align}#1\end{align}}
\newcommand{\fhat}{{\widehat f}}
\newcommand{\fstar}{{f^*}}
\newcommand{\gstar}{{g^*}}
\newcommand{\Tla}{T_\la}
\newcommand{\ala}{\alpha_*}
\newcommand{\bla}{\beta_*}
\providecommand{\scal}[2]{\left\langle{#1},{#2}\right\rangle}
\newcommand{\ones}{{\mathbbm{1}}}
\newcommand{\wass}{Wasserstein}
\newcommand{\wreg}{\widetilde{\textnormal{S}}}
\newcommand{\wlam}{\wreg_\lambda}
\newcommand{\wlambda}{\wlam}
\newcommand{\wregla}{\wlam}
\newcommand{\wtilde}{\textnormal{S}}
\newcommand{\wtildela}{\textnormal{S}_\la}
\newcommand{\la}{\lambda}
\newcommand{\E}{\mathcal{E}}
\newcommand{\crop}[1]{{\bar{#1}}}
\newcommand{\loss}{\mathcal{S}}
\renewcommand{\eqref}[1]{Eq.~(\ref{#1})}
\crefname{assumption}{Asm.}{Asm.}
\crefname{equation}{Eq.}{Eqs.}
\crefname{figure}{Fig.}{Figs.}
\crefname{table}{Tab.}{Tabs.}
\crefname{algorithm}{Alg.}{Algs.}
\crefname{section}{Sec.}{Secs.}
\declaretheorem[name=Theorem,refname=Thm.]{theorem}
\declaretheorem[name=Remark]{remark}
\declaretheorem[name=Corollary,refname=Cor.,sibling=theorem]{corollary}
\declaretheorem[name=Definition,refname=Def.,sibling=theorem]{definition}
\declaretheorem[name=Example]{example}
\newcommand{\citep}{\cite}
\title{\sffamily\huge\bf Differential Properties of Sinkhorn Approximation for Learning with Wasserstein Distance}
\author{\small Giulia Luise$^{1}$ \\ {\scriptsize\em g.luise.16@ucl.ac.uk} \and \small Alessandro Rudi$^2$ \\ {\scriptsize\em alessandro.rudi@inria.fr} \\ \and \small Massimiliano Pontil$^{1,3}$ \\ {\scriptsize\em  m.pontil@cs.ucl.ac.uk ~~~~~}   \and \small Carlo Ciliberto$^{1}$ \\ {\scriptsize\em c.ciliberto@ucl.ac.uk} \\ $ $ \\  }
\begin{document}

\maketitle
\begin{abstract}
\noindent Applications of optimal transport have recently gained remarkable attention thanks to the computational advantages of entropic regularization. However, in most situations the  Sinkhorn approximation of the \wass\ distance is replaced by a regularized version that is less accurate but easy to differentiate. In this work we characterize the differential properties of the original Sinkhorn distance, proving that it enjoys the same smoothness as its regularized version and we explicitly provide an efficient algorithm to compute its gradient. We show that this result benefits both theory and applications: on one hand, high order smoothness confers statistical guarantees to learning with \wass\ approximations. On the other hand, the gradient formula allows us to efficiently solve learning and optimization problems in practice. Promising preliminary experiments complement our analysis. 
\end{abstract}


\section{Introduction}

\footnotetext[1]{University College London, WC1E 6BT London, United Kingdom}\footnotetext[2]{INRIA - Sierra Project-team, École Normale Supérieure, Paris, 75012 Paris, France.}\footnotetext[3]{Computational Statistics and Machine Learning - Istituto Italiano di Tecnologia, 16100 Genova, Italy}
Applications of optimal transport have been gaining increasing momentum in machine learning. This success is mainly due to the recent introduction of Sinkhorn approximation \cite{sink,peyre2017computational}, which offers an efficient alternative to the heavy cost of evaluating the \wass\ distance directly. 
The computational advantages have motivated recent applications in optimization and learning over the space of probability distributions, where the \wass\ distance is a natural metric. However, in these settings adopting Sinkhorn approximation requires solving a further optimization problem {\em with respect to} the corresponding distance rather than only evaluating it in a point. This consists in a bi-level problem \cite{dempe2002foundations} for which it is challenging to derive an optimization approach\cite{genevay2018learning}. As a consequence, a regularized version of the Sinkhorn distance is usually considered \cite{courtydomainadap,Frogner2015,pmlr-v51-rolet16,cuturi14, BenamouCCNP15}, for which it is possible to efficiently compute a gradient and thus employ it in first-order optimization methods \cite{cuturi14}. More recently, also efficient automatic differentiation strategies have been proposed \cite{bonneel2016wasserstein}, with applications ranging from dictionary learning \cite{schmitz2018wasserstein} to GANs \cite{genevay2018learning} and discriminat analysis \cite{Flamary2018}. A natural question is whether the easier tractability of this regularization is paid in terms of accuracy. Indeed, while as a direct consequence of \cite{Cominetti1994} it can be shown that the original Sinkhorn approach provides a sharp approximation to the \wass\ distance \cite{Cominetti1994}, the same is not guaranteed for its regularized version. 

In this work we recall both theoretically and empirically that in optimization problems the original Sinkhorn distances is significantly more favorable than its regularized counterpart, which has been indeed noticed to have a tendency to find over-smooth solutions \cite{WassBarySparseSupp}. We take this as a motivation to study the differential properties of the sharp Sinkhorn with the goal of deriving a strategy to address optimization and learning problems over probability distributions. The principal contributions of this work are threefold. Firstly, we show that both Sinkhorn distances are smooth functions. Secondly, we provide an explicit formula to efficiently compute the gradient of the sharp Sinkhorn. As intended, this latter result allows us to adopt this function in applications such as approximating \wass\ barycenters \cite{cuturi14}, which to the best of our knowledge has not been investigated in this setting so far. 

As a third main contribution, we provide a novel sound approach to the challenging problem of {\em learning with Sinkhorn loss}, recently considered in \cite{Frogner2015}. In particular, we leverage the smoothness of the Sinkhorn distance to study the generalization properties of a structured prediction estimator adapted from \cite{CilibertoRR16} to this setting, proving consistency and finite sample bounds. Explicit knowledge of the gradient allows to solve the learning problem in practice. We provide preliminary empirical evidence of the effectiveness of the proposed approach. 

\section{Background: Optimal Transport and Wasserstein Distance}\label{background}
\label{OTtools} 

We provide here a brief overview of the notions used in this work. Given our interest in the computational aspects of optimal transport metrics we refer the reader to \cite{peyre2017computational} for a more in depth introduction to the topic.

Optimal transport theory investigates how to compare probability measures over a domain $\X$. Given a distance function $\distance:\X\times\X\to\R$ between points on $\X$ (e.g. the Euclidean distance on $\X=\R^d$), the goal of optimal transport is to ``translate'' (or lift) it to distances between probability distributions \textit{over} $\X$. 
This allows to equip the space $\pspace(X)$ of probability measures on $\X$ with a metric referred to as {\em \wass\ } distance, which, for any $\mu,\nu \in \pspace(X)$ and $p\geq1$ is defined (see \cite{villani2008optimal}) as
\eqal{\label{def_Wass}
\was_p^p(\mu,\nu)=\inf_{\boldsymbol{\pi}\in \Pi(\mu,\nu)}\int_{X\times X}\distance^p(x,y)\,\dd\boldsymbol{\pi}(x,y),
}
where $\was_p^p$ denotes the $p$-th power of $\was_p$ and where $\Pi(\mu,\nu)$ is the set of probability measures on the product space $X\times X$ whose marginals coincide with $\mu$ and $\nu$; namely 
\begin{equation}\label{TP}
 \Pi(\mu,\nu)=\Big\{~\boldsymbol{\pi}\in\pspace(X\times X) \quad\Big|\quad {\proj_{1}}{\sharp}\boldsymbol{\pi}=\mu,\quad{\proj_{2}}{\sharp}\boldsymbol{\pi}=\nu~\Big\},
\end{equation}
with $\proj_{i}(x_1,x_2)=x_i$ the projection operators for $i=1,2$ and $\proj_i\sharp\boldsymbol{\pi}$ the push-forward of $\boldsymbol{\pi}$ \cite{villani2008optimal}, namely $\proj_1\sharp\boldsymbol\pi = \boldsymbol\pi(\cdot,\X)$ and $\proj_2\sharp\boldsymbol\pi = \boldsymbol\pi(\X,\cdot)$.\\

\noindent {\bf \wass\ Distance on Discrete Measures.} In the following we focus on measures with discrete support. In particular, we consider distributions $\mu,\nu\in\pspace(X)$ that can be written as linear combinations $\mu = \sum_{i=1}^n a_i  \delta_{x_i}$ and $\nu = \sum_{j=1}^m b_j \delta_{y_j}$ of Dirac's deltas centered on a finite number $n$ and $m$ of points $(x_i)_{i=1}^n$ and $(y_j)_{j=1}^m$ in $\X$. In order for $\mu$ and $\nu$ to be probabilities, the vector weights $a = (a_1,\dots,a_n)^\top\in\Delta_n$ and $b = (b_1,\dots,b_m)^\top\in\Delta_m$ must belong respectively to the $n$ and $m$-dimensional simplex, defined as
\eqal{\label{eq:simplex}
  \Delta_n=\left\{ ~p\in\mathbb{R}^{n}_+ ~~ \middle| ~~ p^\top \ones_n = 1 ~\right\}
}
where $\R_+^n$ is the set of vectors $p\in\R^n$ with non-negative entries and $\ones_n\in\R^n$ denotes the vector of all ones, so that $p^\top \ones_n = \begin{matrix}\sum_{i=1}^{n}p_i\end{matrix}$ for any $p\in\R^n$. In this setting, the evaluation of the \wass\ distance corresponds to solving a network flow problem \cite{BTsi} in terms of the weight vectors $a$ and $b$
\begin{equation}\label{Wass_non_reg} 
\was_p^p(\mu,\nu) =\min_{T\in \Pi(a,b)} \langle T,M\rangle
\end{equation}
where $M\in\mathbb{R}^{n\times m}$ is the {\em cost matrix} with entries $M_{ij}=\distance(x_i,y_j)^p$, $\scal{T}{M}$ is the Frobenius product $\textnormal{Tr}(T^\top M)$ and $\Pi(a,b)$ denotes the {\em transportation polytope}
\eqal{
  \Pi(a,b)=\{T\in \mathbb{R}^{n\times m}_{+} \quad \Big| \quad T\mathbbm{1}_m=a,\quad T^{\top}\mathbbm{1}_n=b\Big\},
}
which specializes $\Pi(\mu,\nu)$ in \Cref{TP} to this setting and contains all possible joint probabilities with marginals ``corresponding'' to $a$ and $b$. In the following, with some abuse of notation, we will denote by $\was_p(a,b)$ the \wass\ distance between the two discrete measures $\mu$ and $\nu$ with corresponding weight vectors $a$ and $b$.\\

\noindent {\bf An Efficient Approximation of the \wass\ Distance.} Solving the optimization in \Cref{Wass_non_reg} is computationally very expensive \cite{sink}.  To overcome the issue, the following regularized version of the problem is considered, 
\begin{equation}\label{eq:wass-reg} 
\wreg_\lambda(a,b)~=~\min_{T\in \Pi(a,b)} ~ \scal{T}{M}-\frac{1}{\lambda}h(T) \qquad \textrm{with} \qquad h(T)~=~-\sum_{i,j=1}^{n,m}T_{ij}(\log T_{ij}-1)
\end{equation}
where $\la>0$ is a regularization parameter. Indeed, as observed in \cite{sink}, the addition of the entropy $h$ makes the problem significantly more amenable to computations. In particular, the optimization in \Cref{eq:wass-reg} can be solved efficiently via Sinkhorn's matrix scaling algorithm \cite{sinkhorn1967}. We refer to the function $\wregla$ as the {\em regularized Sinkhorn} distance.

In contrast to the \wass\ distance, the regularized Sinkhorn distance is differentiable (actually smooth, see \Cref{smooth_wtilde}) with respect to both entries $a$ and $b$, hence particularly appealing for practical applications where the goal is to solve a minimization over probability spaces. Indeed, this distance has been recently used with success in settings related to {\em barycenter estimation} \cite{cuturi14,BenamouCCNP15,AltschulerWR17}, supervised learning \cite{Frogner2015} and dictionary learning \cite{pmlr-v51-rolet16}.



\section{Motivation: Better Approximation of the \wass{} Distance}\label{sec:motivations}

The computational benefit provided by the regularized Sinkhorn distance is paid in terms of the approximation with respect to the \wass\ distance. Indeed, the entropic term in \Cref{eq:wass-reg} perturbs the value of the original functional in \Cref{Wass_non_reg} by a term proportional to $1/\la$, leading to potentially very different behaviours of the two functions (\Cref{example:bary-delta} illustrates this effect in practice). In this sense, a natural candidate for a better approximation is
\eqal{\label{wtilde} 
  \wtilde_\la(a,b) ~=~ \scal{~T_\la~}{~M~} \qquad \textrm{with} \qquad T_\la ~=~ \argmin_{T\in\Pi(a,b)}~\scal{~T~}{~M~}-\frac{1}{\lambda}h(T)
}
that corresponds to eliminating the contribution of the entropic regularizer $h(T_\la)$ from $\wreg_\la$ {\em after} the transport plan $T_\la$ has been obtained.  The function $\wtildela$ was originally introduced in \cite{sink} as the Sinkhorn distance, although recent literature on the topic has often adopted this name for the regularized version \Cref{eq:wass-reg}. To avoid confusion, in the following we will refer to $\wtildela$ as the {\em sharp Sinkhorn} distance. Note that we will interchangeably use the notations $\wtilde_\la(a,b)$ and $ \wtilde_\la(\mu,\nu)$ where clear from the context. 

The function $\wtildela$ defined in \Cref{wtilde}  is nonnegative and safisfies the triangular inequality. However, $\wtildela(a,a)\neq 0$, and hence $\wtildela$ is not -strictly speaking- a distance on $\Delta_n$. As shown in \cite{sink}(Thm. 1), it sufficies to multiply  $\wtildela(a,b)$ by $\mathbf{1}_{a\neq b}$  to recover an actual distance which satisfies all the axioms. Despite this fact, with some sloppiness we will refer to $\wtildela$ itself as Sinkhorn \textit{distance}.

As the intuition suggests, the absence of the entropic term $h(T_\la)$ is reflected in a faster rate at approximating the \wass\ distance. The following result makes this point precise. 


\begin{restatable}{proposition}{convtowass}\label{convergence_to_wass} Let $\lambda>0$. For any pair of discrete measures $\mu,\nu\in\pspace(X)$ with respective weights $a\in\Delta_n$ and $b\in\Delta_m$, we have
\eqal{\label{eq:convergence-to-wass}
\big|~\wtilde_\lambda(\mu,\nu)-\was(\mu, \nu)~\big|\leq c_1\, \textnormal{e}^{-\lambda} \qquad\qquad \big|~\wreg_\la(\mu,\nu)-\was(\mu,\nu)~\big|\leq c_2\lambda^{-1},
}
where $c_1,c_2$ are constants independent of $\lambda$, depending on the support of $\mu$ and $\nu$. 
\end{restatable}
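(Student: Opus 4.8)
The plan is to read the Wasserstein cost as the network--flow linear program $\was(\mu,\nu)=\min_{T\in\Pi(a,b)}\scal{T}{M}$ of \Cref{Wass_non_reg} and to exploit that both $\wreg_\la$ and $\wtilde_\la$ are built from the \emph{same} entropically penalised minimiser $T_\la$ of \Cref{wtilde}. Let $\mathcal F^\star=\argmin_{T\in\Pi(a,b)}\scal{T}{M}$ denote the optimal face of this LP and let $T_0=\argmax_{T\in\mathcal F^\star}h(T)$ be the (unique) maximum--entropy optimal plan; since $T_0$ is LP--optimal, $\scal{T_0}{M}=\was(\mu,\nu)$, and $T_0$ is exactly the limit of the path $\la\mapsto T_\la$ as $\la\to\infty$.

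The estimate for $\wreg_\la$ is the easy half and uses only boundedness of $h$ on the transportation polytope: every $T\in\Pi(a,b)$ has $\sum_{ij}T_{ij}=1$ and $0\le T_{ij}\le 1$, so $h(T)=1-\sum_{ij}T_{ij}\log T_{ij}\in[1,\,1+\log(nm)]$. Using $T_0$ as a feasible competitor in \Cref{eq:wass-reg} gives $\wreg_\la(\mu,\nu)\le\scal{T_0}{M}-\tfrac1\la h(T_0)\le\was(\mu,\nu)$; conversely $\wreg_\la(\mu,\nu)=\scal{T_\la}{M}-\tfrac1\la h(T_\la)\ge\was(\mu,\nu)-\tfrac1\la\sup_{\Pi(a,b)}h$. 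Combining, $\abs{\wreg_\la(\mu,\nu)-\was(\mu,\nu)}\le c_2\la^{-1}$ with $c_2=1+\log(nm)$, depending on the supports of $\mu,\nu$ only through their cardinalities.

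For the sharp Sinkhorn distance the key identity is $\abs{\wtilde_\la(\mu,\nu)-\was(\mu,\nu)}=\abs{\scal{T_\la-T_0}{M}}\le\nor{M}_F\,\nor{T_\la-T_0}_F$, so the claim reduces to showing that the penalised trajectory converges to $T_0$ \emph{exponentially fast} in $\la$. This is precisely the asymptotic analysis of the exponential (entropic) penalty path in linear programming of \cite{Cominetti1994}. The mechanism: writing the first--order conditions for \Cref{wtilde} in Sinkhorn--scaling form $T_{\la,ij}=e^{\la(\alpha_i+\beta_j-M_{ij})}$, as $\la\to\infty$ the dual iterates $(\alpha,\beta)$ approach an optimal dual pair, the coordinates with strictly positive limiting reduced cost $M_{ij}-\alpha_i-\beta_j$ vanish at rate $e^{-\rho\la}$, and the surviving coordinates converge to $T_0$; this yields $\nor{T_\la-T_0}_F\le c\,e^{-\rho\la}$ with $\rho>0$ and $c$ controlled by $\nor{M}_F$ and the combinatorial geometry of $\Pi(a,b)$, i.e.\ by the cost gap between the optimal face and the remaining vertices. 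Plugging back gives the first bound in the statement.

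The routine parts --- the two--sided bound on $h$, the sandwich for $\wreg_\la$, and the Cauchy--Schwarz step --- are immediate. The main obstacle is the exponential convergence $\nor{T_\la-T_0}_F\le c\,e^{-\rho\la}$: one must control the full dual path $(\alpha(\la),\beta(\la))$ uniformly in $\la$, separate the ``fast'' from the ``slow'' coordinates in the possible presence of primal and dual degeneracy, and certify that the limit is the maximum--entropy optimal plan rather than some other optimal vertex. This is exactly the content we import from \cite{Cominetti1994}; the remaining effort is to track their constants so that $c_1$ can be expressed in terms of the supports of $\mu$ and $\nu$.
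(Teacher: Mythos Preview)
Your proposal is correct and follows essentially the same route as the paper: the $\wreg_\la$ bound via the sandwich argument (optimality of $T_\la$ for the penalised problem together with boundedness of $h$ on $\Pi(a,b)$), and the $\wtilde_\la$ bound via $|\scal{T_\la-T_0}{M}|\le\nor{M}\,\nor{T_\la-T_0}$ combined with the exponential convergence of the entropic trajectory from \cite{Cominetti1994}. If anything your version is slightly more explicit, computing $c_2=1+\log(nm)$ and writing the rate as $e^{-\rho\la}$, whereas the paper simply cites the result of \cite{Cominetti1994} directly.
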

The proof of \Cref{convergence_to_wass} is a direct consequence of the result in \cite{Cominetti1994} (Prop. $5.1$), which proves the exponential convergence of  $\Tla$ in \Cref{wtilde} to the optimal plan of $\was$ with maximum entropy, namely 
\eqal{
\Tla \rightarrow T^*=\argmax \Big\{~ h(T) ~\Big| ~T\in\Pi(a,b) \quad \langle T,M\rangle=\was(\mu,\nu) ~\Big\}
 }
as $\la\to+\infty$. While the sharp Sinkhorn distance $\wtildela$ preserves the rate of converge of $\Tla$, the extra term $\lambda^{-1}h(\Tla)$ in the definition of the regularized Sinkhorn distance $\wlambda$ causes the slower rate. In particular, \Cref{eq:convergence-to-wass} (Right) is a direct consequence of \cite{CuturiP16} (Prop. 2.1). In the appendix we provide more context on the derivation of the two inequalities.

\Cref{convergence_to_wass} suggests that, given a fixed $\la$, the sharp Sinkhorn distance 
can offer a more accurate approximation of the \wass\ distance. This intuition is further supported by \Cref{example:bary-delta} where we compare the behaviour of the two approximations on the problem of finding an optimal transport barycenter of probability distributions.\\

\noindent {\bf \wass\ Barycenters}. Finding the barycenter of a set of discrete probability measures ${\cal D} = (\nu_i)_{i=1}^\ell$ is a challenging problem in applied optimal transport settings \cite{cuturi14}. The {\em \wass\ barycenter} is defined as
\eqal{\label{eq:bary-functional}
  \mu_{\was}^* ~=~ \argmin_{\mu} ~ {\B_{\was}}(\mu,{\cal D}), \qquad\qquad \B_{\was}(\mu,{\cal{D}}) ~=~ \sum_{i=1}^\ell~ \alpha_i~ \was(\mu,\nu_i),
}
namely the point $\mu_{\was}^*$ minimizing the weighted average distance between all distributions in the set ${\cal D}$, with $\alpha_i$ scalar weights. Finding the \wass\ barycenter is computationally very expensive and the typical approach is to approximate it with the barycenter $\tilde\mu_\la^*$, obtained by substituting the \wass\ distance $\was$ with  the regularized Sinkhorn distance $\wreg_\la$ in the the objective functional of \Cref{eq:bary-functional}. However, in light of the result in \Cref{convergence_to_wass}, it is natural to ask whether the corresponding baricenter $\mu_\la^*$ of the sharp Sinkhorn distance $\wtilde_\lambda$ could provide a better estimate of the \wass\ one. While we defer an empirical comparison of the two barycenters to \Cref{Exp}, here we consider a simple scenario in which the sharp Sinkhorn can be proved to be a significantly better approximation of the \wass\ distance.

\begin{example}[Barycenter of two Deltas]\label{example:bary-delta}
We consider the problem of estimating the barycenter of two Dirac's deltas $\mu_1 = \delta_{z}, \mu_2 = \delta_{y}$ centered at $z=0$ and $y = n$  with $z,y\in\R$ and $n$ an even integer. Let $\X = \{x_0,\dots,x_n\}\subset\R$ be the set of all integers between $0$ and $n$ and $M$ the cost matrix with squared Euclidean distances. Assuming uniform weights $\alpha_1 = \alpha_2$, it is well-known that the \wass\ barycenter is the delta centered on the euclidean mean of $z$ and $y$, $\mu_{\was}^* = \delta_{\frac{z+y}{2}}$. A direct calculation (see Appendix \ref{Apsec:barydelte}) shows instead that the \emph{regularized} Sinkhorn barycenter $\tilde{\mu}_{\la}^* = \sum_{i=0}^n a_i \delta_{x_i}$ tends to spread the mass across all $x_i\in\X$, accordingly to the amount of regularization,
\eqal{
  a_i \propto \textnormal{e}^{-\la((z-x_i)^2+(y- x_i)^2)/2} \qquad i=0,\dots,n,
}
behaving similarly to a (discretized) Gaussian with standard deviation of the same order of the regularization $\lambda^{-1}$. On the contrary, 
 the \emph{sharp} Sinkhorn barycenter equals the \wass\ one, namely $\mu_\la^* = \mu_{\was}^*$ {\em for every} $\la>0$. An example of this behavior is reported in \Cref{Figbary}.
\end{example}

\begin{figure}[t]
 \centering
  \includegraphics[width=0.6\columnwidth]{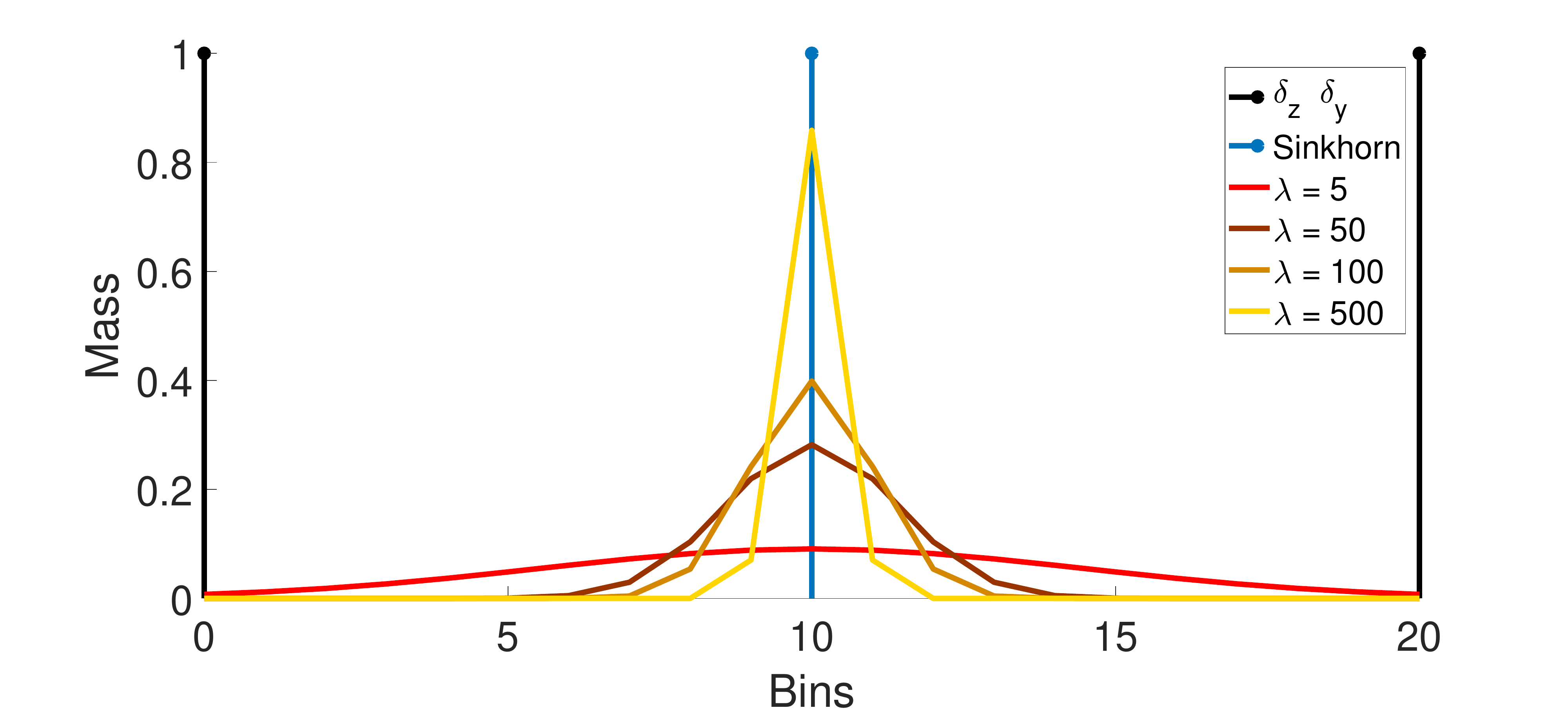}
  \caption{Comparison of the sharp (Blue) and regularized (Orange) barycenters of two Dirac's deltas (Black) centered in 0 and 20 for different values of $\la$.}
  \label{Figbary}
\end{figure}

\noindent {\bf Main Challenges of the Sharp Sinkhorn.} The example above, together with \Cref{convergence_to_wass}, provides a strong argument in support of adopting the sharp Sinkhorn distance over its regularized version. 
However, while the gradient of the regularized Sinkhorn distance can be easily computed  (see \cite{cuturi14} or \Cref{sec:differential-properties}) and therefore it is possible to address optimization problems such as the barycenter in \Cref{eq:bary-functional} with first-order methods (e.g. gradient descent), an explicit form for the gradient of the sharp Sinkhorn distance has not been considered. Also, approaches based on automatic differentiation have been recently adopted to compute the gradient of a variant of $\wtildela$, where the plan $\Tla$ is the one obtained after a fixed number $L$ of iterations \cite{genevay2018learning,schmitz2018wasserstein,genevay2018learning,Flamary2018}. These methods have been observed to be both computationally very efficient and also very effective in practice on a number of machine learning applications. However, in this work we are interested in investigating the analytic properties of the gradient of the sharp Sinkhorn distance, for which we provide an explicit algorithm in the following.

\section{Differential Properties of Sinkhorn Distances}\label{sec:differential-properties}

In this section we present two main results of this work, namely a proof of the smoothness of the two Sinkhorn distances introduced above, and the explicit derivation of a formula for the gradient $\wtildela$. These results will be key to employ the sharp Sinkhorn distance in practical applications. The results are obtained leveraging the Implicit Function Theorem \cite{edwards2012advanced} via a proof technique analogous to that in \cite{bengio2000gradient,chapelle2002choosing,Flamary2018} which we outline in this section and discuss in detail in the appendix.


\begin{restatable}{theorem}{cinftysmooth}\label{smooth_wtilde}
For any $\la>0$, the Sinkhorn distances $\wregla$ and $\wtilde_\lambda: \Delta_n\times \Delta_n \rightarrow \mathbb{R}$ are $\textnormal{C}^\infty$ in the interior of their domain.
\end{restatable}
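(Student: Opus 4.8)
The plan is to obtain smoothness from the Implicit Function Theorem applied to the optimality conditions of the entropic program in \Cref{eq:wass-reg}: I will show that the optimal plan $\Tla$ depends $C^\infty$ on the marginals $(a,b)$ in the interior of $\Delta_n\times\Delta_m$, and then conclude that $\wregla$ and $\wtildela$ are $C^\infty$ because they are explicit smooth functions of $\Tla$. Throughout, ``interior'' means the weight vectors have strictly positive entries.

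First I would record the standard structure of the minimizer. For $(a,b)$ interior the objective $\scal{T}{M}-\tfrac1\la h(T)$ is strictly convex and coercive on the transportation polytope, so it has a unique minimizer $\Tla$, all of whose entries are strictly positive, and by Lagrangian duality there exist dual vectors $\ala\in\R^n$, $\bla\in\R^m$ (associated with the two marginal constraints) such that $(\Tla)_{ij}=\exp\!\big(\la(\ala_i+\bla_j-M_{ij})\big)$. Substituting this parametrization into the marginal equations $T\ones_m=a$, $T^\top\ones_n=b$ produces a nonlinear system $F(\ala,\bla,a,b)=0$, where $F$ stacks the constraint residuals. Since $F$ is built only from exponentials and linear maps, it is manifestly $C^\infty$ jointly in all its arguments.

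The main obstacle is that this system is rank-deficient, so the Implicit Function Theorem does not apply to $F$ as stated: the dual pair is determined only up to the gauge transformation $(\ala,\bla)\mapsto(\ala+c\,\ones_n,\bla-c\,\ones_m)$, and, because $\ones_n^\top a=\ones_m^\top b=1$, one of the $n+m$ marginal equations is redundant. I would remove this one-dimensional redundancy, e.g. by fixing $(\ala)_n=0$ and discarding one (redundant) marginal equation, obtaining a reduced map $\bar F:\R^{\,n+m-1}\times(\mathrm{int}\,\Delta_n\times\mathrm{int}\,\Delta_m)\to\R^{\,n+m-1}$. Its Jacobian with respect to the dual variables is, up to the positive factor $\la$, the restriction to the complement of the gauge direction $(\ones_n,-\ones_m)$ of the block matrix
\[
H(\Tla)=\begin{pmatrix}\diag(\Tla\ones_m) & \Tla\\[2pt] \Tla^\top & \diag(\Tla^\top\ones_n)\end{pmatrix},\qquad (x,y)^\top H(\Tla)\,(x,y)=\sum_{i,j}(\Tla)_{ij}\,(x_i+y_j)^2\ \ge\ 0 .
\]
Hence $H(\Tla)$ is positive semidefinite, and since all entries of $\Tla$ are strictly positive its kernel is exactly $\mathrm{span}\{(\ones_n,-\ones_m)\}$; restricting to the gauge complement makes it positive definite, therefore the Jacobian of $\bar F$ is invertible at any solution point. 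Establishing this invertibility cleanly is the heart of the proof.

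The Implicit Function Theorem then yields $C^\infty$ maps $(a,b)\mapsto(\ala(a,b),\bla(a,b))$ solving $\bar F=0$ locally around any interior point, so that $\Tla(a,b)_{ij}=\exp\!\big(\la(\ala(a,b)_i+\bla(a,b)_j-M_{ij})\big)$ is $C^\infty$ on $\mathrm{int}\,\Delta_n\times\mathrm{int}\,\Delta_m$. Since $\Tla(a,b)$ has strictly positive entries, $h$ is $C^\infty$ on a neighbourhood of its range, and therefore $\wregla(a,b)=\scal{\Tla(a,b)}{M}-\tfrac1\la h(\Tla(a,b))$ and $\wtildela(a,b)=\scal{\Tla(a,b)}{M}$ are compositions of $C^\infty$ maps, hence $C^\infty$. (The same Jacobian computation will also underlie the explicit gradient formula obtained later in this section.) The argument can alternatively be run directly on the Sinkhorn scaling fixed point $u=a/(Kv)$, $v=b/(K^\top u)$ with $K_{ij}=\textnormal{e}^{-\la M_{ij}}$, but the dual formulation makes the positive-definiteness of the relevant Jacobian most transparent.
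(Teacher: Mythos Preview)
Your proposal is correct and follows essentially the same route as the paper: both remove the one-dimensional gauge redundancy in the dual (the paper drops $\beta_m$, you fix $\alpha_n=0$) and then apply the Implicit Function Theorem to the dual optimality conditions, concluding smoothness of $\Tla$ and hence of $\wtildela$ and $\wregla$. The only stylistic difference is that the paper invokes strict convexity of the reduced dual functional to get invertibility of the Hessian, whereas you make this explicit via the quadratic form $\sum_{ij}(\Tla)_{ij}(x_i+y_j)^2$ and the identification of its kernel with the gauge direction---this is exactly the computation underlying the paper's claim, just unpacked.
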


\noindent \Cref{smooth_wtilde} guarantees both Sinkhorn distances to be infinitely differentiable. In \Cref{learning} this result will allow us to derive an estimator for supervised learning with Sinkhorn loss and characterize its corresponding statistical properties (i.e. universal consistency and learning rates). The proof of \Cref{smooth_wtilde} is instrumental to derive a formula for the gradient of $\wtildela$. We discuss here its main elements and steps while referring to the supplementary material for the complete proof. 

\begin{proof}[Sketch of the proof] 

The proof of \Cref{smooth_wtilde} hinges on the characterization of the (Lagrangian) dual problem of the regularized Sinkhorn distance in \Cref{eq:wass-reg}. This can be formulated (see e.g. \cite{sink}) as
\eqal{\label{functL}
  \max_{\alpha,\beta} ~\mathcal{L}_{a,b}(\alpha,\beta), \qquad \mathcal{L}_{a,b}(\alpha,\beta) = \alpha^\top a + \beta^\top b - \frac{1}{\la} \sum_{i,j=1}^{n,m} \textnormal{e}^{-\la(M_{ij} - \alpha_i - \beta_j)} 
}
with dual variables $\alpha\in\R^n$ and $\beta\in\R^m$. 

\noindent By Sinkhorn's scaling theorem \cite{sinkhorn1967}, the optimal primal solution $\Tla$ in \Cref{wtilde} can be obtained from the dual solution $(\ala,\bla)$ of \Cref{functL} as
\eqal{\label{eq:primal-dual-solution}
  \Tla = \diag(\textnormal{e}^{\la \ala})~\textnormal{e}^{-\la M}~ \diag(\textnormal{e}^{\la \bla}),
}
where for any ${\mathsf v}\in\R^n$, the vector $\textnormal{e}^\mathsf{v}\in\R^n$ denotes the element-wise exponentiation of $\msf v$ (analogously for matrices) and $\diag(\mathsf{v})\in\R^{n \times n}$ is the diagonal matrix with diagonal corresponding to $\mathsf{v}$. \\

\noindent Since both Sinkhorn distances are smooth functions of $\Tla$, it is sufficient to show that $\Tla(a,b)$ itself is smooth as a function of $a$ and $b$.  Given the characterization of \Cref{eq:primal-dual-solution} in terms of the dual solution, this amounts to prove that $\ala(a,b)$ and $\bla(a,b)$ are smooth with respect to $(a,b)$, which is the most technical step of the proof and can be shown leveraging the Implicit Function Theorem \cite{edwards2012advanced}. Indeed, the dual variables $\ala(a,b)$ and $\bla(a,b)$ are obtained as argmax of the strictly convex function $\mathcal{L}$ \Cref{functL}. The argmax corresponds to a stationary point of the gradient, which means $\nabla_{\alpha,\beta}\mathcal{L}(\ala,\bla)=0$. The last part of the proof consists in applying the Implicit Function Theorem to the function $\nabla_{\alpha,\beta}\mathcal{L}$. Note that the theorem can be applied thanks to the strict convexity of $\mathcal{L}$, which guarantess that the Jacobian of $\nabla_{\alpha,\beta}\mathcal{L}$, which is the Hessian of $\mathcal{L}$ is invertible. All the details are discussed at length in the Appendix.
\end{proof}


\noindent {\bf The gradient of Sinkhorn distances}. We now discuss how to derive the gradient of Sinkhorn distances with respect to one of the two variables. In both cases, the dual problem introduced in \Cref{functL} plays a fundamental role. In particular, as pointed out in \cite{cuturi14}, the gradient of the regularized Sinkhorn distance can be obtained directly from the dual solution as $\nabla_a\wregla (a,b) = \ala(a,b)$,
for any $a\in\R^n$ and $b\in\R^m$. This characterization is possible because of well-known properties of primal and dual optimization problems \cite{BTsi}.

\noindent The sharp Sinkhorn distance does not have a formulation in terms of a dual problem and therefore a similar argument does not apply. Nevertheless, we show here that it is still possible to obtain its gradient in closed form in terms of the dual solution. 

\begin{restatable}{theorem}{gradient}\label{gradientwtilde}
Let $M\in\R^{n \times m}$ be a cost matrix, $a\in\Delta_n$, $b\in\Delta_m$ and $\la>0$. Let $\mathcal{L}_{a,b}(\alpha,\beta)$ be defined as in \eqref{functL}, with argmax in $(\ala,\bla)$. Let $\Tla$ be defined as in \Cref{eq:primal-dual-solution}. Then,
\begin{equation}\label{gradientSink}
  \nabla_a \wtilde_\lambda (a,b)=\proj_{\textnormal{T} \Delta_n}\big(A~ L \mathbbm{1}_m + B ~\crop{ L}^\top \mathbbm{1}_n\big)
\end{equation}
where $L=\Tla \odot M\in\R^{n \times m}$ is the entry-wise multiplication between $\Tla$ and $M$ and $\crop{ L}\in\R^{{n}\times m-1}$ corresponds to $L$ with the last column removed. The terms $A\in\R^{n \times n}$ and $B\in\R^{n \times {m-1}}$ are
\begin{equation}\label{eq:hessian-a-and-b}
  [A ~ B]=-\lambda ~D~~\big[~\nabla^2_{(\alpha,\beta)}\mathcal{L}_{a,b}(\ala,\bla)~\big]^{-1},
\end{equation}
with $D=[\textnormal{I} ~\mathbf{0}]$ the matrix concatenating the $n\times n$ identity matrix $\textnormal I$ and the matrix $\mathbf{0}\in\R^{n \times m-1}$ with all entries equal to zero. The operator $\proj_{\textnormal{T} \Delta_n}$ denotes the projection onto the tangent plane $\textnormal{T}\Delta_n = \{ x\in \mathbb{R}^n:\,\,\sum_{i=1}^n x_i=0\}$ to the simplex $\Delta_n$.
\end{restatable}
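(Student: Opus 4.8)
The plan is to compute $\nabla_a \wtildela(a,b)$ by the chain rule, exploiting the factorization $\wtildela = \scal{\Tla}{M}$ together with the closed form $\Tla = \diag(\textnormal{e}^{\la\ala})\,\textnormal{e}^{-\la M}\,\diag(\textnormal{e}^{\la\bla})$ from \Cref{eq:primal-dual-solution}. The only nontrivial dependence on $a$ is through the dual optimizers $\ala(a,b)$ and $\bla(a,b)$, whose smoothness is guaranteed by \Cref{smooth_wtilde}. So first I would write $\partial_{a_k}\wtildela = \sum_{i,j} M_{ij}\,\partial_{a_k}(\Tla)_{ij}$ and then expand $\partial_{a_k}(\Tla)_{ij}$ using the product form: differentiating the two diagonal scalings produces exactly $(\Tla)_{ij}\,(\la\,\partial_{a_k}(\ala)_i + \la\,\partial_{a_k}(\bla)_j)$. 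Contracting against $M_{ij}$ and writing $L = \Tla\odot M$, this collapses to $\la\big( (L\ones_m)^\top \partial_{a_k}\ala + (L^\top\ones_n)^\top \partial_{a_k}\bla \big)$, i.e. the gradient is a linear image of the Jacobian of the dual variables with respect to $a$.

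Next I would obtain that Jacobian via the Implicit Function Theorem applied to the optimality condition $\nabla_{(\alpha,\beta)}\mathcal{L}_{a,b}(\ala,\bla)=0$, exactly as in the proof sketch of \Cref{smooth_wtilde}. A subtlety is that $\mathcal L_{a,b}$ is invariant under the shift $(\alpha,\beta)\mapsto(\alpha+t\ones_n,\beta-t\ones_m)$, so its Hessian is singular and the dual solution is only unique up to this one-dimensional family; the standard fix is to fix a gauge, e.g. drop the last coordinate $\beta_m$ (equivalently restrict to $\beta\in\R^{m-1}$), which is why the statement features $\crop L\in\R^{n\times(m-1)}$ with the last column removed and the block structure $[A~B]$. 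On the reduced variables the Hessian $\nabla^2_{(\alpha,\beta)}\mathcal{L}_{a,b}(\ala,\bla)$ is invertible (strict concavity of the reduced dual), and differentiating $\nabla_{(\alpha,\beta)}\mathcal L_{a,b}=0$ in $a$ gives $\nabla^2_{(\alpha,\beta)}\mathcal L \cdot \partial_a(\ala,\bla)^{(\mathrm{red})} + \partial_a \nabla_{(\alpha,\beta)}\mathcal L = 0$. Since $\mathcal L_{a,b}(\alpha,\beta) = \alpha^\top a + \beta^\top b - \tfrac1\la\sum e^{-\la(M_{ij}-\alpha_i-\beta_j)}$, the mixed derivative $\partial_{a}\nabla_{(\alpha,\beta)}\mathcal L$ is simply $[\,\textnormal I~;\mathbf 0\,]^\top = D^\top$, because $a$ appears only linearly and only in the $\alpha$-block. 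Hence $\partial_a(\ala,\bla)^{(\mathrm{red})} = -\big[\nabla^2_{(\alpha,\beta)}\mathcal L\big]^{-1} D^\top$, and so $\partial_a(\ala,\bla)^{(\mathrm{red})\top} = -D\big[\nabla^2_{(\alpha,\beta)}\mathcal L\big]^{-1}$ after using symmetry of the Hessian; multiplying by the factor $\la$ from the first step yields precisely $[A~B]$ as in \Cref{eq:hessian-a-and-b}.

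Assembling, the unconstrained differential of $a\mapsto\wtildela(a,b)$ is $A\,L\ones_m + B\,\crop L^\top\ones_n$ — the term $B$ contracts against $\crop L^\top\ones_n$ rather than $L^\top\ones_n$ because only the first $m-1$ coordinates of $\bla$ are free variables after fixing the gauge. Finally, since $a$ is constrained to the simplex $\Delta_n$, the relevant gradient is the Riemannian one on $\Delta_n$, obtained by projecting the ambient gradient onto the tangent space $\textnormal T\Delta_n = \{x:\sum_i x_i=0\}$; this gives the outer $\proj_{\textnormal T\Delta_n}$ in \Cref{gradientSink}. One should check that the answer is independent of which coordinate of $\beta$ was fixed: changing the gauge shifts the ambient gradient by a multiple of $\ones_n$ (this is where the identity $L\ones_m\cdot\text{stuff}$ interacts with the constraint $\Tla\ones_m = a$, $\Tla^\top\ones_n=b$), which is killed by the tangent-space projection, so the formula is well defined.

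The main obstacle is the bookkeeping around the gauge freedom: one must argue carefully that (i) the reduced Hessian is invertible, (ii) the mixed partials of $\mathcal L_{a,b}$ in $a$ produce exactly $D^\top$, and (iii) the final projection onto $\textnormal T\Delta_n$ both implements the simplex constraint and eliminates the dependence on the gauge choice. The rest — the chain-rule expansion of $\scal{\Tla}{M}$ and the IFT differentiation of the stationarity condition — is routine given \Cref{smooth_wtilde}.
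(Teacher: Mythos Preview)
Your proposal is correct and follows essentially the same route as the paper: chain rule through $\wtildela=\scal{\Tla}{M}$ using the diagonal-scaling form of $\Tla$, then the Implicit Function Theorem applied to the (reduced) dual stationarity condition after fixing the gauge by dropping $\beta_m$, with the mixed derivative $\partial_a\nabla_{(\alpha,\beta)}\mathcal L = D^\top$ and the final projection onto $\textnormal T\Delta_n$. Your additional remark that the projection also kills any gauge dependence is a nice sanity check that the paper does not spell out, but otherwise the argument is the same.
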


  \begin{algorithm}[t]
  \newcommand{\tn}{\textnormal}
    \caption{Computation of $\nabla_a \wtildela(a,b)$}\label{alggradient}
    \begin{algorithmic}
      \State ~
      \State {\bfseries Input:} $a \in \Delta_n, \, b\in \Delta_m$, cost matrix $M\in\mathbb{R}_{+}^{n,m}$, $\lambda>0$.
      \State~
      \State~~ $T ~=~ ${\sc Sinkhorn}$(a,b,M,\la),$ ~~~ $\bar T ~=~ T_{1:n,1:(m-1)}$ 
      \State~~ $L ~=~ T \odot M$,~~~~ ~~~~~~~$\bar L ~=~ L_{1:n,1:(m-1)}$
      \State~~ $D_1 ~=~ \diag(T\ones_m)$,~~~$D_2 ~=~ \diag(\bar T^\top \ones_n)^{-1}$
     \State~~ $ H ~=~ D_1 - \bar T D_2 \bar T^\top$,
     \State~~ $\tn f ~=~ -L \ones_m + \bar T D_2 \bar L^\top \ones_{n}$
      
      \State~~ $\tn g ~=~ H^{-1}~\tn f$
      \State ~

      \State {\bfseries Return:} $\tn g - \ones_n (\tn g^\top \ones_n)$
    \end{algorithmic}
  \end{algorithm}

\noindent The proof of \Cref{gradientwtilde} can be found in the supplementary material (Sec. \ref{sec:appdiff}). The result is obtained by first noting that the gradient of $\wtildela$ is characterized (via the chain rule) in terms of the the gradients $\nabla_a\ala(a,b)$ and $\nabla_a\bla(a,b)$ of the dual solutions. The main technical step of the proof is to show that these gradients correspond respectively to the terms $A$ and $B$ defined in \Cref{eq:hessian-a-and-b}.

To obtain the gradient of $\wtildela$ in practice, it is necessary to compute the Hessian $\nabla^2_{(\alpha,\beta)}\mathcal{L}_{a,b}(\ala,\bla)$ of the dual functional. A direct calculation shows that this corresponds to the matrix 
\eqal{\label{eq:hessian}
  \nabla^2_{(\alpha,\beta)}\mathcal{L}(\ala,\bla) = \left[~\begin{matrix} \diag(a) & \bar \Tla \\ \bar \Tla^\top & \diag(\bar b)\end{matrix}~\right],
}
where $\bar \Tla$ (equivalently $\bar b$) corresponds to $\Tla$ (respectively $b$) with the last column (element) removed. See the supplementary material (Sec. \ref{sec:appdiff}) for the details of this derivation.

From the discussion above, it follows that the gradient of $\wtildela$ can be obtained in closed form in terms of the transport plan $\Tla$. \Cref{alggradient} reports an efficient approach to perform this operation. The algorithm can be derived by simple algebraic manipulation of \Cref{gradientSink}, given the characterization of the Hessian in \Cref{eq:hessian}. We refer to the supplementary material for the detailed derivation of the algorithm.\\

\noindent{\bf Barycenters with the sharp Sinkhorn.} Using \Cref{alggradient} we can now apply the accelerated gradient descent approach proposed in \cite{cuturi14}  
to find barycenters with respect to the sharp Sinkhorn distance. \Cref{baryell} reports a qualitative experiment inspired by the one in \cite{cuturi14}, with the goal of comparing the two Sinkhorn barycenters. We considered $30$ images of random nested ellipses on a $50 \times 50$ grid. We interpret each image as a distribution with support on pixels. The cost matrix is given by the squared Euclidean distances between pixels. \Cref{baryell} shows some examples images in the dataset and the corresponding barycenters of the two Sinkhorn distances. 
While the barycenter $\tilde\mu_\la^*$ of $\wlambda$ suffers a blurry effect, the $\wtildela$ barycenter $\mu_\la^*$ is very sharp, suggesting a better estimate of the ideal one. 

\begin{figure}[t]
 \centering
  \includegraphics[height=0.3\columnwidth]{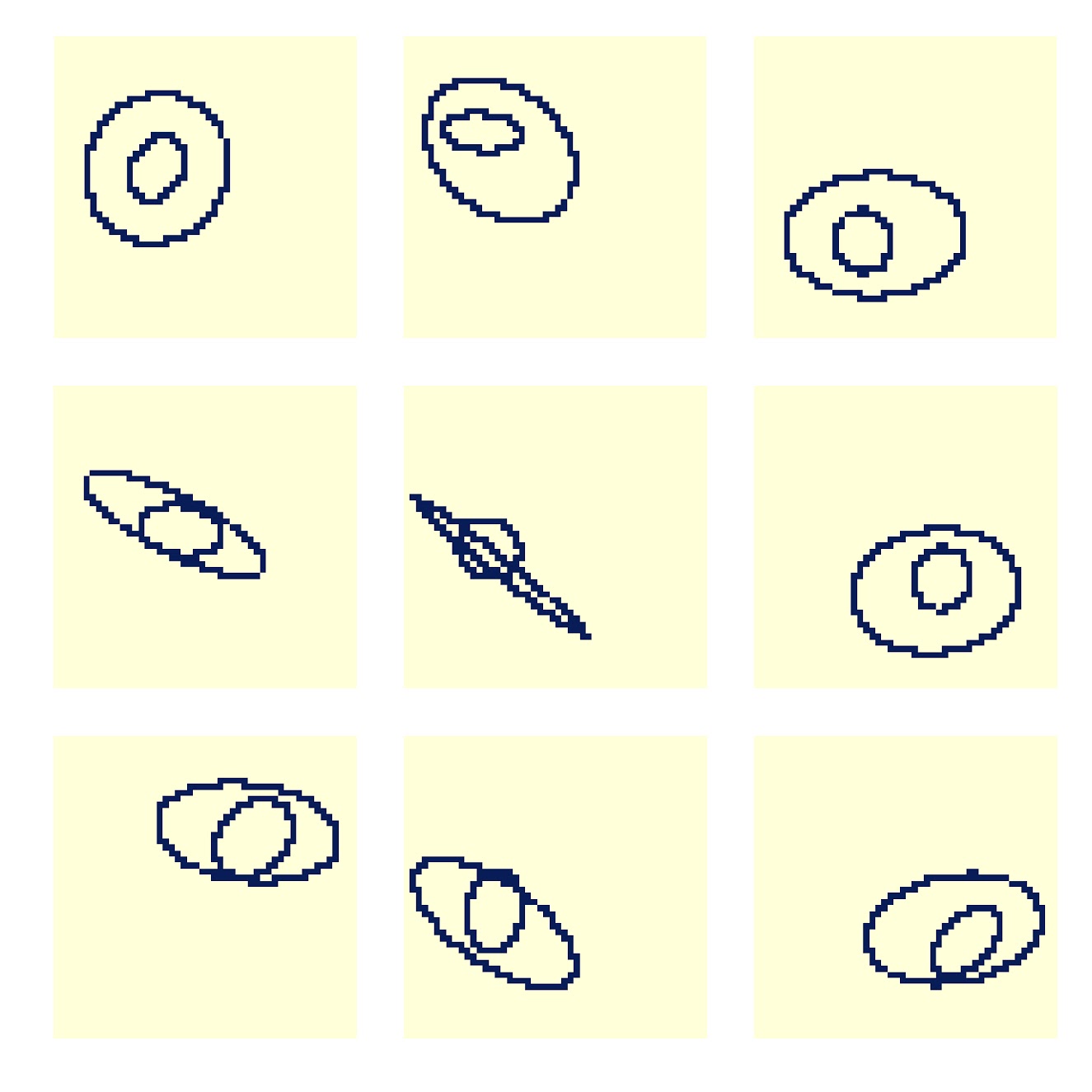}\quad
    \includegraphics[height=0.3\columnwidth]{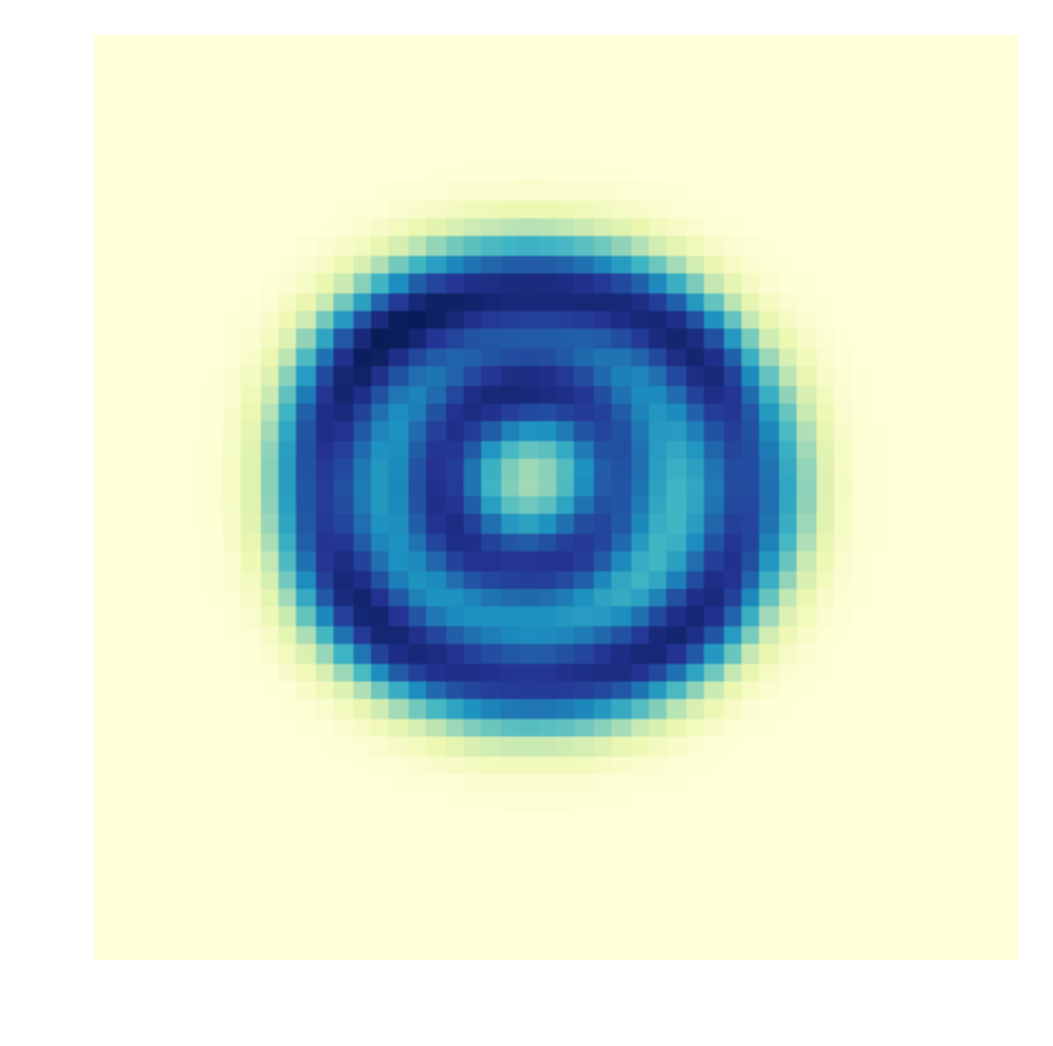} \quad
    \includegraphics[height=0.3\columnwidth]{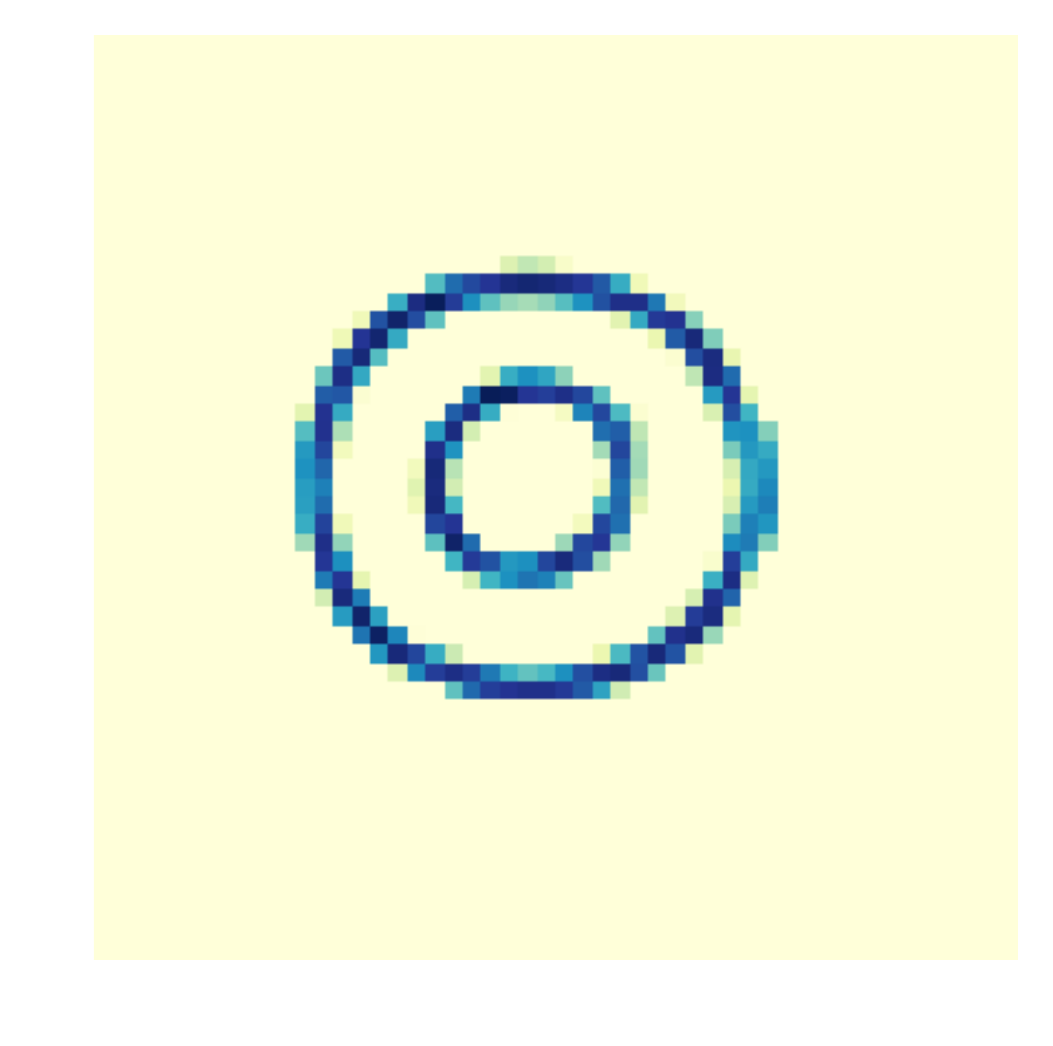}
  \caption{Nested Ellipses: (Left) Sample input data. (Middle) Regularized (Right) sharp Sinkhorn barycenters.}
  \label{baryell}
\end{figure} 

We conclude this section with a computational consideration on the two methods. 

\begin{remark}[Computations]\label{rem:computation-comparison}
We compare the computational complexity of evaluating the gradients of $\wregla$ and $\wtildela$. 
Both gradients rely on the solution of the Sinkhorn problem in \Cref{eq:wass-reg}, which requires $O(n m \epsilon^{-2} \lambda )$ operations to achieve an $\epsilon$-accurate solution (this is easily derived from \cite{AltschulerWR17}, see supplementary material). While the gradient of $\wregla$ does not require further operations, 
the gradient of $\wtildela$  requires 
the inversion of an $n\!\times\! n$ matrix 
as stated in \Cref{alggradient}. 
However, since the matrix to be inverted is a sum of a diagonal matrix and a low-rank matrix, the inversion  requires $O(n m^2)$ operations (e.g. using Woodbury matrix identity), for a total cost of the gradient equal to $O(n m(\epsilon^{-2} \lambda + m))$. 
Even for very large $n$, \Cref{alggradient} is still efficient in all those settings where $m<< n$. 
   \\

Moreover, note that the most expensive additional operations 
consist of matrix multiplications and the inversion of a positive definite matrix, which are very efficiently implemented on modern machines. Indeed, in our experiments the Sinkhorn algorithm was always the most expensive component of the computation. 
It is important to notice however that in practical applications both routines can be parallelized, and several ideas can be exploited to lower the computational costs of either algorithms depending on the problem structure (see for instance the {\em convolutional \wass\ distance} in \cite{solomon2015convolutional}). Therefore, depending on the setting, the computation of the gradient of the sharp Sinkhorn could be comparable or significantly slower than the regularized Sinkhorn or the automatic differentiation considered in \cite{genevay2018learning}. 
\end{remark}


\section{Learning with Sinkhorn Loss Functions}\label{learning}

Given the characterization of smoothness for both Sinkhorn distances, in this section we focus on a specific application: supervised learning with a Sinkhorn loss function. Indeed, the result of \Cref{smooth_wtilde} will allow to characterize the statistical guarantees of an estimator devised for this problem in terms of its universal consistency and learning rates. Supervised learning with the (regularized) Sinkhorn loss was originally considered in \cite{Frogner2015}, where an empirical risk minimization approach was adopted. In this work we take a structured prediction perspective \cite{bakir2007predicting}. This will allow us to study a learning algorithm with strong theoretical guarantees that can be efficiently applied in practice.\\

\noindent {\bf Problem Setting.} Let $\xx$ be an input space and $\yy = \Delta_n$ a set of histograms. As it is standard in supervised learning settings, the goal is to approximate a minimizer of the {\em expected risk}
\eqal{\label{eq:expected-risk}
  \min_{f:\xx\to\yy} ~ \E(f), \qquad\qquad \E(f) = \int_{\xx\times\yy} \loss(f(x),y) ~ d\rho(x,y)
}
given a finite number of training points $(x_i,y_i)_{i=1}^\ell$ independently sampled from the unknown distribution $\rho$ on $\xx\times\yy$. The loss function $\loss:\yy\times\yy\to\R$ measures prediction errors and in our setting corresponds to either $\wtildela$ or $\wregla$.\\

\noindent {\bf Structured Prediction Estimator.} Given a training set $(x_i,y_i)_{i=1}^\ell$, we consider $\fhat:\xx\to\yy$ the structured prediction estimator proposed in \cite{CilibertoRR16}, defined such that
\begin{equation}\label{eq:estimator}
  \hat{f}(x)=\argmin_{y\in \yy}\sum_{i=1}^\ell\alpha_i(x)~\loss(y,y_i)
\end{equation}
for any $x\in\xx$. The weights $\alpha_i(x)$ are learned from the data and can be interpreted as scores suggesting the candidate output distribution $y$ to be close to a specific output distribution $y_i$ observed in training {\em according to the metric} $\loss$. While different learning strategies can be adopted to learn the $\alpha$ scores,  we consider the kernel-based approach in \cite{CilibertoRR16}. In particular, given a positive definite kernel $k:\xx\times\xx\to\R$ \cite{aronszajn1950theory}, we have 
\eqal{\label{eq:alphas}
  \alpha(x) = (\alpha_1(x),\dots,\alpha(x))^\top = (K + \gamma\ell  I)^{-1} K_x
}
where $\gamma>0$ is a regularization parameter while $K\in\R^{\ell \times \ell}$ and $K_x\in\R^n$ are respectively the empirical kernel matrix with entries $K_{ij} = k(x_i,x_j)$ and the evaluation vector with entries $(K_x)_i = k(x,x_i)$, for any $i,j=1,\dots,\ell$.



\begin{remark}[Structured Prediction and Differentiability of Sinkhorn]\label{rem:structured-prediction-sinkhorn}
The current work provides both a {\em theoretical} and {\em practical} contribution to the problem of learning with Sinkhorn distances. On one hand, the smoothness guaranteed by \Cref{smooth_wtilde} will allow us to characterize the generalization properties of the estimator (see below). On the other hand, \Cref{gradientwtilde} provides an efficient approach to {\em solve} the problem in \Cref{eq:estimator}. Indeed note that this optimization corresponds to solving a barycenter problem in the form of \Cref{eq:bary-functional}. Given the gradient estimation algorithm in \Cref{alggradient}, this work allows to solve it by adopting first order methods such as gradient descent.  
\end{remark}

\noindent {\bf Universal Consistency of $\fhat$}. We now characterize the theoretical properties of the estimator introduced in \Cref{eq:estimator}. We start by showing $\fhat$ is {\em universally consistent}, namely that it achieves minimum expected risk as the number $\ell$ of training points increases. To avoid technical issues on the boundary, in the following we will require $\yy = \Delta_n^\epsilon$ for some $\epsilon>0$ to be the set of points $p\in\Delta_n$ with $p_i\geq\epsilon$ for any $i=1,\dots,n$. The main technical step in this context is to show that for any smooth loss function on $\yy$, the estimator in \Cref{eq:estimator} is consistent. In this sense, the characterization of smoothness in \Cref{smooth_wtilde} is key to prove the following result, in combination with Thm. 4 in \cite{CilibertoRR16}.

\begin{restatable}[Universal Consistency]{theorem}{TUniversal}\label{thm:consistency}
Let $\yy=\Delta_n^\epsilon$, $\la>0$ and $\loss$ be either $\wregla$ or $\wtildela$. Let $k$ be a bounded continuous universal\footnote{\noindent This is a standard assumptions for universal consistency (see \cite{steinwart2008support}). Example: $k(x,x') = \textnormal{e}^{-\|x-x'\|^2/\sigma}$.} kernel on $\xx$. For any $\ell \in \N$ and any distribution $\rho$ on $\xx \times \yy$ let $\fhat_\ell:\xx \to \yy$ be the estimator in \Cref{eq:estimator} 
trained with $(x_i,y_i)_{i=1}^\ell$ points independently sampled from $\rho$ and $\gamma_\ell= \ell^{-1/4}$. Then
\eqal{\label{eq:universal-consistency}
  \lim_{\ell\to \infty} {\cal E}(\fhat_\ell) ~~=~   \min_{f:\xx\to\yy} {\cal E}(f) \quad \textrm{with probability} ~ 1.
}
\end{restatable}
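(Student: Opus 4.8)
The plan is to reduce the claim to a direct application of Thm.~4 in \cite{CilibertoRR16}, whose hypotheses are: (i) the output space $\yy$ is a compact subset of a Euclidean space; (ii) the loss $\loss:\yy\times\yy\to\R$ admits a \emph{Structure Encoding Loss Function} (SELF) decomposition, i.e.\ there exist a separable Hilbert space $\hy$, a continuous feature map $\psi:\yy\to\hy$ and a bounded linear operator $V:\hy\to\hy$ such that $\loss(y,y') = \scal{\psi(y)}{V\psi(y')}_{\hy}$ for all $y,y'\in\yy$; (iii) the kernel $k$ on $\xx$ is bounded, continuous and universal. Under these hypotheses the estimator of \Cref{eq:estimator} with $\gamma_\ell=\ell^{-1/4}$ is universally consistent, which is exactly \Cref{eq:universal-consistency}. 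So the whole proof consists of verifying (i)--(iii) in our setting, the substantive one being (ii).

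First I would address (i): we take $\yy = \Delta_n^\epsilon$, which is closed and bounded in $\R^n$, hence compact; the restriction to $\Delta_n^\epsilon$ rather than $\Delta_n$ keeps us away from the boundary of the simplex, which is exactly where \Cref{smooth_wtilde} does not apply. Hypothesis (iii) is assumed verbatim in the statement (bounded continuous universal $k$, e.g.\ the Gaussian kernel), so nothing is needed there.

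The heart of the argument is (ii). Here I would invoke the general principle — established in \cite{CilibertoRR16} and its follow-ups — that any loss function that is sufficiently smooth on a compact domain admits a SELF decomposition. Concretely, since $\yy=\Delta_n^\epsilon$ is a compact subset of $\R^n$ with nonempty interior and, by \Cref{smooth_wtilde}, both $\wregla$ and $\wtildela$ are $\textnormal{C}^\infty$ on a neighbourhood of $\yy\times\yy$ (this is where we use $\epsilon>0$: the interior of the domain of $\wregla,\wtildela$ contains $\Delta_n^\epsilon$ since $\epsilon>0$ forces all coordinates strictly positive), the loss is in particular a $\textnormal{C}^\infty$ — a fortiori continuous and, say, Sobolev-regular enough — function on the compact set $\yy\times\yy$. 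One can then build the SELF decomposition explicitly: let $\hy = L^2(\yy)$ (or a Sobolev space thereon), let $\psi(y)$ be a smooth localized feature such as $\psi(y) = k_{\yy}(y,\cdot)$ for a suitable smooth kernel $k_\yy$ on $\yy$, and define $V$ via the integral kernel obtained by expressing $\loss$ against this feature; smoothness of $\loss$ guarantees the resulting $V$ is bounded. Alternatively, and more simply, one uses the known fact that every smooth function on a compact set lies in the RKHS of a universal kernel and therefore is itself SELF with respect to that kernel's feature map. Either route yields the triple $(\hy,\psi,V)$ required by hypothesis (ii).

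I expect the main obstacle to be making the SELF verification rigorous without just citing it as a black box: one must check that the feature map $\psi$ is \emph{continuous} and bounded on $\yy$ and that the operator $V$ encoding $\loss$ is genuinely \emph{bounded} on $\hy$, both of which follow from the $\textnormal{C}^\infty$ (hence Lipschitz, hence bounded-derivative) regularity of $\loss$ on the compact set $\yy\times\yy$ — and it is precisely for this that \Cref{smooth_wtilde} is invoked. Once (i)--(iii) are in place, the conclusion is immediate from \cite[Thm.~4]{CilibertoRR16}: the estimator $\fhat_\ell$ is universally consistent, i.e.\ $\E(\fhat_\ell)\to\min_f\E(f)$ almost surely, which is \Cref{eq:universal-consistency}. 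I would also remark that the same argument applies uniformly to $\loss\in\{\wregla,\wtildela\}$ since \Cref{smooth_wtilde} covers both, so no case distinction is needed.
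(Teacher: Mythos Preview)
Your proposal is correct and follows essentially the same route as the paper: verify compactness of $\yy=\Delta_n^\epsilon$, note the kernel hypothesis is assumed, show that the $\textnormal{C}^\infty$ regularity from \Cref{smooth_wtilde} implies the SELF property, and then invoke Thm.~4 of \cite{CilibertoRR16}. The paper makes your SELF step concrete by taking $\hy=H^{(n+1)/2}(\yy)$ (a Sobolev RKHS), $\psi(y)=\mathsf{k}_y$ its reproducing kernel feature, and identifying $\loss\in\textnormal{C}^\infty(\yy\times\yy)\subset H^{(n+1)/2}\otimes H^{(n+1)/2}\cong\textnormal{HS}(H^{(n+1)/2})$ with a bounded (in fact Hilbert--Schmidt) operator $V$; your $L^2(\yy)$ suggestion would not directly give a continuous point-evaluation feature map, but your parenthetical ``or a Sobolev space thereon'' is exactly the right fix.
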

The proof of \Cref{thm:consistency} is reported in Appendix \ref{sec:applearning}. A result analogous to the one above was originally proved in \cite{CilibertoRR16} (Thm. 4) for a wide family of functions referred to as {\em Structure Encoding Loss Function (SELF)} (see \cite{ciliberto2017consistent} or the appendix of this work). While several loss functions used in structured prediction have been observed to satisfy this SELF definition, such characterization was not available for the Sinkhorn distances. The main technical step in the proof of \Cref{thm:consistency} in this sense is to prove that any smooth function on $\yy$ satisfies the definition of SELF (see \Cref{def:SELF} and  \Cref{thm:smooth-self} in the Appendix). Combining this result with Thm.$4$ in \cite{CilibertoRR16}, we obtain that {\em for every smooth loss function $\loss$ on $\yy$ the corresponding estimator $\fhat$ in \Cref{eq:estimator} is universally consistent}. The universal consistency of the Sikhorn distances is therefore guaranteed by the smoothness result of \Cref{smooth_wtilde}.

\Cref{thm:consistency} guarantees $\fhat$ to be a valid estimator for the learning problem. To our knowledge, this is the first result characterizing the universal consistency of an estimator minimizing (an approximation to) the \wass\ distance. \\

\noindent{\bf Learning Rates.} By imposing standard regularity conditions on the learning problem, it is possible to provide also excess risk bounds for $\fhat$. Since these conditions are quite technical, we provide here a brief overview while deferring an in-depth discussion to \Cref{sec:applearning}. 

\noindent We start from the observation (see e.g. Lemma $6$ in \cite{CilibertoRR16}) that the solution $\fstar:\xx\to\yy$ of the learning problem introduced in \Cref{eq:expected-risk} is such that
\eqal{
	\fstar(x) = \argmin_{z\in\yy} ~ \int_{\yy} \loss(z,y)~d\rho(y|x)
}
almost surely on $\xx$. In particular $\fstar(x)$ corresponds to the minimizer of the {\em conditional expectation} $\mathbb{E}_{y|x} \loss(z,y)$ of the loss $\loss(z,y)$ with respect to $y$ given $x\in\xx$. As it is standard in statistical learning theory, in order to obtain generalization bounds for estimating $\fstar$ we will impose regularity assumptions on the conditional distribution $\rho(\cdot|x)$ or, more precisely, on its corresponding {\em conditional mean embedding} (\cite{song2009hilbert,song_kernel_2013}) with respect to a suitable space of functions. 

Let $k:\xx\times\xx\to\R$ be the kernel used for the estimation of the weights $\alpha$ in \Cref{eq:alphas} and let $\ff$ be the associated reproducing kernel Hilbert spaces (RKHS) (see \cite{aronszajn1950theory} for a definition). Let $h:\yy\times\yy\to\R$ be the kernel $h(y,y') = \textnormal{e}^{-\|y - y'\|}$ on the output set $\yy$. The RKHS associated to $h$ is $\hh = W_2^{(n+1)/2}(\yy)$, the Sobolev space of square integrable functions with smoothness $\frac{n+1}{2}$ (see e.g. \cite{wendland2004scattered}). We consider a function $\gstar:\xx\to\hh$ such that 
\eqal{\label{eq:gstar}
	\gstar(x) = \int_{\yy} h(y,\cdot) d\rho(y|x)
}
almost surely on $\xx$. For any $x\in\X$, the quantity $\gstar(x)$ is known as the {\em conditional mean embedding} of $\rho(\cdot|x)$ in $\hh$, originally introduced in \cite{song2009hilbert,song_kernel_2013}. In particular, in \cite{song2009hilbert} it was shown that in order to obtain learning rates for an estimator approximating $\gstar$, a key assumption is that $\gstar$ belongs to  $\hh\otimes\ff$, the tensor product between the space $\hh$ on the output and the space $\ff$ on the input. In this work we will require the same assumption. 

\noindent It can be verified that $\hh\otimes\ff$ is a RKHS for vector-valued functions \cite{micchelli2005learning,lever2012conditional,alvarez2012kernels} and that by asking $\gstar\in\hh\otimes\ff$ we are requiring the conditional mean embedding of $\rho(\cdot|x)$ to be sufficiently regular {\em as a function on $\xx$}. We are now ready to report our result on the statistical performance of $\fhat$.

\begin{restatable}[Learning Rates]{theorem}{TRates}\label{thm:rates-formal} Let $\yy=\Delta^\varepsilon_n$, $\lambda>0$ and $\loss$ be either $\wlambda$ or $\wtildela$. Let $\hh = W_2^{(n+1)/2}(\yy)$ and $k:\xx\times \xx\rightarrow \R$ be a bounded continuous reproducing kernel on $\xx$ with associated RKHS $\ff$. Let $\hat{f}_\ell:\xx\rightarrow \yy$ be the estimator in \Cref{eq:estimator} trained with $\ell$ training points independently sampled from $\rho$ and with $\gamma=\ell^{-1/2}$. If $\gstar$ defined in \Cref{eq:gstar} is such that $\gstar\in\hh\otimes\ff$, then
\eqal{\label{eq:learning-rate}
 \E(f) - \min_{f:\xx\to\yy} \E(f) \leq c\,\tau^2 \ell^{-1/4}
 }
 holds with probability $1-8\textnormal{e}^{-\tau}$ for any $\tau>0$, with $c$ a constant independent of $\ell$ and $\tau$.
\end{restatable}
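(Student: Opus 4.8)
The plan is to reduce \Cref{thm:rates-formal} to the general excess-risk bound for the structured prediction estimator of \cite{CilibertoRR16}, whose only hypothesis not already granted by the assumptions of the theorem is that the loss $\loss$ be a \emph{Structure Encoding Loss Function} (SELF): there exist a separable Hilbert space, a continuous bounded feature map $\psi:\yy\to\hh$, and a bounded operator $V:\hh\to\hh$ with $\loss(y,y')=\scal{\psi(y)}{V\psi(y')}_\hh$ for all $y,y'\in\yy$. The core of the argument is therefore the claim that \emph{every} $\textnormal{C}^\infty$ loss on $\yy=\Delta_n^\varepsilon$ is SELF, realised with the specific choice $\psi(y)=h(y,\cdot)$, the canonical feature map of $\hh=W_2^{(n+1)/2}(\yy)$; this is precisely where \Cref{smooth_wtilde} enters, and with this choice the induced conditional mean embedding is exactly the $\gstar$ of \Cref{eq:gstar}, so the source condition $\gstar\in\hh\otimes\ff$ is the one required by \cite{CilibertoRR16}.

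To establish the SELF claim I would argue as follows. Since $\varepsilon>0$, the compact set $\Delta_n^\varepsilon$ lies in the relative interior $\{p:p_i>0\}$ of the simplex, on which both $\wregla$ and $\wtildela$ are $\textnormal{C}^\infty$ by \Cref{smooth_wtilde}; hence $\loss$ is smooth on an open neighbourhood of the compact product $\yy\times\yy$ and, after extension to the ambient affine hyperplane $\{\ones_n^\top p=1\}$, restricts to an element of the mixed-smoothness Sobolev space $W_2^{(n+1)/2}(\yy)\otimes W_2^{(n+1)/2}(\yy)$. Identifying this tensor product of RKHS with the space of Hilbert--Schmidt operators on $\hh$ --- equivalently, using that $\hh\otimes\hh$ is the RKHS on $\yy\times\yy$ with kernel $(y_1,y_1',y_2,y_2')\mapsto h(y_1,y_2)\,h(y_1',y_2')$ --- the function $\loss$ corresponds to a Hilbert--Schmidt, in particular bounded, operator $V$ with $\loss(y,y')=\scal{h(y,\cdot)}{V\,h(y',\cdot)}_\hh$; boundedness of $\psi$ is immediate since $\|h(y,\cdot)\|_\hh^2=h(y,y)=1$. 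This yields $\loss\in\mathrm{SELF}$ with an explicit bound on $\|V\|$ in terms of a Sobolev norm of $\loss$ (and, incidentally, supplies the ingredient behind \Cref{thm:consistency} via Thm.~4 of \cite{CilibertoRR16}).

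With the SELF decomposition in hand, I would conclude by invoking the learning-rate theorem of \cite{CilibertoRR16}: the comparison inequality proved there controls the excess risk of the estimator \Cref{eq:estimator} by the error of the vector-valued kernel ridge regression estimate $\ghat_\ell$ of $\gstar$, namely $\E(\fhat_\ell)-\min_f\E(f)\leq c_\loss\,\|\ghat_\ell-\gstar\|$; under the source condition $\gstar\in\hh\otimes\ff$ and with $\gamma_\ell=\ell^{-1/2}$, the standard analysis of regularized least squares in a RKHS bounds $\|\ghat_\ell-\gstar\|$ by $c\,\tau^2\,\ell^{-1/4}$ with probability $1-8\textnormal{e}^{-\tau}$, which is exactly \Cref{eq:learning-rate} after collecting constants.

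The main obstacle is the SELF claim, specifically making rigorous that a smooth function on the corner domain $\Delta_n^\varepsilon$ lies in $\hh\otimes\hh$ with a quantitative bound on the induced operator norm: this requires (i) treating $\Delta_n^\varepsilon$ as a bounded Lipschitz domain inside the affine hyperplane so that Sobolev extension applies, (ii) the standard but somewhat technical identification of tensor products of RKHS with Hilbert--Schmidt operators and with spaces of dominating mixed smoothness, and (iii) checking that the feature map and operator obtained this way meet the exact measurability and boundedness hypotheses of \cite{CilibertoRR16}. Everything downstream is a black-box application of the kernel ridge regression rates and the comparison inequality already established there.
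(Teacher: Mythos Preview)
Your proposal is correct and follows essentially the same approach as the paper: you verify that the Sinkhorn losses are SELF by using \Cref{smooth_wtilde} together with the embedding $\textnormal{C}^\infty(\yy\times\yy)\subset H^r(\yy)\otimes H^r(\yy)$ and the identification $H^r\otimes H^r\cong\textnormal{HS}(H^r,H^r)$ (this is exactly the content of the paper's \Cref{thm:smooth-self}), and then invoke Thm.~5 of \cite{CilibertoRR16} under the source condition $\gstar\in\hh\otimes\ff$. Your write-up in fact spells out more of the mechanics (the comparison inequality, the ridge-regression rate, and the domain/extension issues for $\Delta_n^\varepsilon$) than the paper's own proof, which simply cites the SELF property and Thm.~5 of \cite{CilibertoRR16}.
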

The proof of \Cref{thm:rates-formal} requires to combine our characterization of the Sinkhorn distances (or more generally smooth functions on $\yy$) as structure encoding loss functions (see \Cref{thm:smooth-self}) with Thm. $5$ in \cite{CilibertoRR16} where a result analogous to the one above is reported for SELF loss functions. See \Cref{sec:applearning} for a detailed proof. 


\begin{remark} A relevant question is whether the  \wass\ distance could be similarly framed in the setting of structured prediction. However, the argument used to address Sinkhorn distances relies on their smoothness properties and cannot be extended to the \wass\ distance, which is not differentiable. A completely different approach may still be successful and we will investigate this question in future work.
\end{remark}




\noindent We conclude this section with a note on previous work. We recall that \cite{Frogner2015} has provided the first {\em generalization bounds} for an estimator minimizing the regularized Sinkhorn loss. In \Cref{thm:rates-formal} however we characterize the {\em excess risk bounds} of the estimator in \Cref{eq:estimator}. The two approaches and analysis are based on different assumptions on the problem. Therefore, a comparison of the corresponding learning rates is outside the scope of this work and is left for future research.

\section{Experiments}\label{Exp}

We present here some experiments that compare the two Sinkhorn distances empirically. Optimization was performed with the accelerated gradient from \cite{cuturi14} for $\wtildela$ and Bregman projections \cite{BenamouCCNP15} for $\wlambda$. The computation of barycenters with Bregman iteration is extremely fast compared to gradient descent. We have then used  the barycenter of $\wlambda$ computed with Bregman projections as \textit{initial datum }for gradient descent with $\wtildela$: this has a positive influence on the number of iterations needed to converge and in this light the optimization with respect to the sharp Sinkhorn distance acts as a \textit{refinement}  of the solution with respect to $\wlambda$.  \\ 

\begin{table}
  \caption{Average absolute improvement in terms of the ideal \wass\ barycenter functional $\B_{\was}$ in \Cref{eq:bary-functional} of \textit{sharp} vs \textit{regularized} Sinkhorn, for barycenters of random measures with sparse support.}
  \label{sample-table}
  \footnotesize
   \begin{tabular}{ccccc}
    \multicolumn{5}{c}{\bf{Support}}                   \\
     \bf{Improvement }    & 1\% &  2\% & 10\% & 50\%  \\
    \midrule
    $\B_{\was}(\tilde\mu_\lambda^*)-\B_{\was}(\mu_\la^*)$ & $14.914   \pm  0.076$     & $12.482 \pm 0.135$   & $2.736\pm 0.569$   & $0.258 \pm 0.012$ \\
    \bottomrule
  \end{tabular}
  \label{Table1}
 \end{table}
 
\noindent {\bf Barycenters with Sinkhorn Distances}. We compared the quality of Sinkhorn barycenters in terms of their approximation of the (ideal) \wass\ barycenter. We considered discrete distributions on $100$ bins, corresponding to the integers from $1$ to $100$ and a squared Euclidean cost matrix $M$. We generated datasets of $10$ measures each, where only $k=1,2,10,50$ (randomly chosen) consecutive bins are different from zero, with the non-zero entries sampled uniformly at random between $0$ and $1$ (and then normalized to sum up to $1$). We empirically chose the Sinkhorn regularization parameter $\la$ to be the smallest value such that the output $\Tla$ of the Sinkhorn algorithm would be within $10^{-6}$ from the transport polytope in 1000 iterations. \Cref{Table1} reports the absolute improvement of the barycenter of the sharp Sinkhorn distance with respect to the one obtained with the regularized Sinkhorn, averaged over $10$ independent dataset generation for each support size $k$. As can be noticed, the sharp Sinkhorn consitstently outperforms its regularized counterpart. Interestingly, this improvement is more evident for measures with sparse support and tends to reduce as the support increases. This is in line with the remark in example \Cref{example:bary-delta} and the fact that the regularization term in $\wregla$ tends to encourage oversmoothed solutions.\\

\noindent {\bf Learning with Wasserstein loss.} We evaluated the Sinkhorn distances in an image reconstruction problem similar to the one considered in \cite{KDE} for structured prediction. Given an image depicting a drawing, the goal is to learn how to reconstruct the lower half of the image (output) given the upper half (input). Similarly to \cite{cuturi14} we interpret each (half) image as an histogram with mass corresponding to the gray levels (normalized to sum up to $1$). For all experiments, according to \cite{CilibertoRR16}, we evaluated the performance of the reconstruction in terms of the classification accuracy of an image recognition SVM classifier trained on a separate dataset. To train the structured prediction estimator in \Cref{eq:estimator} we used a Gaussian kernel with bandwith $\sigma$ and regularization parameter $\gamma$ selected by cross-validation.\\
\begin{figure}[t!]
\CenterFloatBoxes
\begin{floatrow}

\hspace*{0.04\textwidth}
\begin{minipage}[t]{0.65\textwidth}
\footnotesize
\begin{tabular}{ccccc}
   & \multicolumn{4}{c}{\bf Reconstruction Error (\%)} \\
    $\#$ \bf Classes& $\boldsymbol{\mathbf{\wtildela}}$  & $\boldsymbol{\mathbf{\wregla}}$ &  Hell\cite{ciliberto2017consistent} & KDE \cite{KDE} \\
   \midrule  
   $\mathbf{2}$  & $\mathbf{ 3.7\pm 0.6}$  &  $4.9\pm 0.9$ & $8.0\pm 2.4$ & $12.0 \pm 4.1$ \\
   $\mathbf{4}$  & $\mathbf{ 22.2 \pm 0.9}$  &  $31.8 \pm 1.1$ & $29.2\pm 0.8$ & $40.8 \pm 4.2$\\
   $\mathbf{10}$ & $\mathbf{ 38.9 \pm 0.9}$  &  $44.9\pm 2.5$ & $48.3 \pm 2.4$ & $64.9 \pm 1.4$ \\
    \bottomrule
  \end{tabular}
\end{minipage}
\begin{minipage}[t]{0.3\textwidth}
\ffigbox{%
\includegraphics[width=0.5\textwidth]{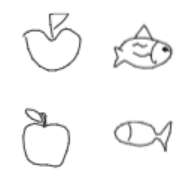}%
}{}
\end{minipage}
\caption{Average reconstruction errors of the Sinkhorn, Hellinger, and KDE estimators on the Google QuickDraw reconstruction problem. On the right, a mini-sample of the dataset. \label{tab:QuickDraw}}
\end{floatrow}
\end{figure} 

\noindent {\emph {Google QuickDraw.}} We compared the performance of the two estimators on a challenging dataset. We selected $c = 2,4,10$ classes from the Google QuickDraw dataset \cite{quickdraw} which consists in images of size $28\times28$ pixels. We trained the structured prediction estimators on $1000$ images per class and tested on other $1000$ images. We repeated these experiments $5$ times, each time randomly sampling a different training and test dataset. \Cref{tab:QuickDraw} reports the reconstruction error (i.e. the classification error of the SVM classifier) over images reconstructed by the Sinkhorn estimators, the structured prediction estimator with Hellinger loss \cite{CilibertoRR16} and the Kernel Dependency Estimator (KDE) \cite{KDE}. As can be noticed, both Sinkhorn estimators perform significantly better than their competitors (except the Hellinger distance outperforming $\wlambda$ on 4 classes). This is in line with the intuition that optimal transport metrics respect the way the mass is distributed on images \cite{sink,cuturi14}. Moreover, it is interesting to note that the estimator of the sharp Sinkhorn distance provides always better reconstructions than its regularized counterpart, supporting the idea that it is more suited to settings where \wass\ distance should be used. \\

The experiments above are a preliminary assessment of the potential of sharp Sinkhorn distance in barycenters and learning settings. More extensive experiments on real data will be matter of future work.

\section{Discussion}

In this paper we investigated the differential properties of Sinkhorn distances. We proved the smoothness of the two functions and derived an explicit algorithm to efficiently compute the gradient of the sharp Sinkhorn distance. Our result allows to employ the sharp Sinkhorn distance in applications that rely on 
first order optimization methods, such as in approximating \wass\ barycenters and supervised learning on probability distributions. In this latter setting, our characterization of smoothness allowed to study the statistical properties of the Sinkhorn distance as loss function. In particular we considered a structured prediction estimator for which we proved universal consistency and generalization bounds.
Future work will focus on further applications and a more extensive comparison with the existing literature.

\bibliographystyle{abbrv}
\bibliography{biblio}

\newpage
\appendix
\section*{\Large Supplementary Material}
\section{Barycenter of Dirac Deltas}\label{Apsec:barydelte}
Wasserstein barycenter problems can be divided into two main classes: problems in which the support is free (and must be computed, generating a nonconvex problem \cite{cuturi14}) and problems where the support is fixed. In some cases, the latter is the only valid choice: for instance, when the geometric domain is a space of symbols and the cost matrix $M$ contains the symbol-to-symbol dissimilarities, no extra information of the symbol space is available and the support of the barycenter will have to lie on a pre-determined set in order to be meaningful. A concrete example is the following: when dealing with histograms on words, the barycenter will optimize how to spread the mass among a set of known words that are used to build the matrix $M$, through a $\mathsf{word2vec}$ operation. In the following we carry out the computation of the barycenter of two Dirac deltas with regularized Sinkhorn and Sinkhorn distances, in order to prove what stated in example \ref{example:bary-delta}.
\paragraph{Barycenter with $\wlam$:}
\noindent Let $\mu=\delta_{z}$ be the Dirac delta centered at $z\in\mathbb{R}^d$  and $\nu=\delta_{y}$  the Dirac delta centered at $y\in\mathbb{R}^d$. We fix the set of admissible support of the barycenter $X=\{x_1,\dots,x_n\}$, where $x_i\in\mathbb{R}^d$ for any $i$. For the sake of simplicity  let us assume that $X$ contains the point $(y+z)/2$. The cost matrices with mutual distances between $z$ and $X$ and $y$ and $X$ will be \[ 
M^z=\{\distance(z,x_i)\}_{i=1}^n\in \mathbb{R}^n, \qquad M^y=\{\distance(y,x_i)\}_{i=1}^n.
\]
Since the support is fixed, only the masses $a=(a_1,\dots,a_n)$ of the barycenter $\tilde{\mu}_{\la}=\sum_{i=1}^n a_i \delta_{x_i}$ are to be computed. Vector $a$ is the  minimizer of the following functional
\[ 
\Delta_n \ni a\longrightarrow {\mathcal{B}_{\wlambda}}(a)=\frac{1}{2} \wlam(a,\delta_{z})+\frac{1}{2} \wlam(a,\delta_{y}).
\]
Note that since Dirac delta has mass 1 concentrated at a point, the transport polytope corresponding to $a$ and a Dirac delta is $\Pi(a,1)$. The elements in $\Pi(a,1)$ are those matrices $T \in\mathbb{R}^{n\times 1}$ such that $T\mathbbm{1}_1=a$ and $T^\top\mathbbm{1}_n=1$. Thus, \begin{align} 
 \begin{pmatrix}
           T_{1} \\
           T_{2} \\
           \vdots \\
           T_{n}
         \end{pmatrix} \begin{pmatrix}
		1
         \end{pmatrix}= \begin{pmatrix}
           a_{1} \\
           a_{2} \\
           \vdots \\
           a_{n}
         \end{pmatrix}
\end{align}
which implies $T_1=a_1,\dots,T_n=a_n$. In this case, $\Pi(a,1)$ contains only one matrix, which coincides with $a^\top$. The distance $\wlam(a,\delta_{z})$ is given by $\langle a^\top, M^z \rangle-\frac{1}{\lambda}h(a)$ and, similarly,   $\wlam(a,\delta_{y})=\langle a^\top, M^y \rangle-\frac{1}{\lambda}h(a)$. Then, the goal is to minimize \[ 
a\longrightarrow  \frac{1}{2}\langle a,M^z\rangle +\frac{1}{2}\langle a,M^y\rangle +\frac{1}{\lambda}\sum_{i=1}^n a_i(\log a_i -1)\]
with the constraint that $a\in \Delta_n$.
The partial derivative with respect to $a_i$ is given by \[
\frac{\partial \mathcal{B}_{\wlambda}}{\partial a_i}= \frac{1}{2}( M^z_i+ M^y_i)+\frac{1}{\lambda}\log a_i
 \]
 Setting it equal to zero, it yields $a_i=\textnormal{e}^{-\lambda( M^z_i + M^y_i)/2}$. The constraint $a\in\Delta_n$ leads to \[ 
 a_i=\frac{\textnormal{e}^{-\lambda( M^z_i + M^y_i)/2}}{\sum_{j=1}^n\textnormal{e}^{-\lambda( M^z_j + M^y_j)/2}}.
 \] Then the barycenter $\tilde{\mu}^*_\la$ has masses $(a_1,\dots,a_n)$ where each $a_i$ is strictly positive, with maximum at the entry corresponding to the point $x_i$ which realizes the minimum  distance from $z$ and $y$, i.e. $(z+y)/2$. The sparsity of the initial deltas is lost. \vspace{1cm}
 
\paragraph{Barycenter with $\wtildela$:} On the other hand, let us compute the barycenter between $\mu$ and $\nu$ with respect to the Sinkhorn distance recalled in \eqref{wtilde}. The very same considerations on $\Pi(a,1)$ still hold, so $\Pi(a,1)$ contains $T=a^\top$ only.
Hence, in this case the Sinkhorn barycenter functional $\mathcal{B}_{\wtildela}$ coincides with the Wasserstein barycenter functional $\mathcal{B}_{\was}$, since  $\wtildela(a,\delta_j)=\langle a^\top,M^j\rangle=\was(a,\delta_j)$, for $j=z,y$. This trivially implies that $\mu_{\la}^*=\mu_{\was}^*$.

\section{Proof of Proposition \ref{convergence_to_wass} in section \ref{sec:motivations}}
\convtowass*
\begin{proof}
As shown in \cite{Cominetti1994}(Prop.5.1), the sequence $\Tla$ converges to an optimal plan of $\was$ as $\lambda$ goes to infinity. More precisely,  \[
\Tla \rightarrow T^*=\textnormal{argmax}_{T\in\Pi(a,b)} \{ h(T); \,\,\, \langle T,M\rangle=\was(\mu,\nu) \} 
  \]
exponentially fast, that is $\norm{\Tla - T^*}_{\mathbb{R}^{nm}}\leq c\,\textnormal{e}^{-\lambda}$.
Thus,\[
\abs{\wtilde_\lambda(\mu,\nu)-\was(\mu,\nu)}=\abs{\langle \Tla,M\rangle-\langle T^*,M\rangle}\leq \norm{\Tla-T^*}\norm{M}\leq  c\,\textnormal{e}^{-\lambda}\norm{M}=:c_1\textnormal{e}^{-\lambda}. 
  \] 
As for the second part, let  $T^*$ be the $\textnormal{argmax}_{T\in\Pi(a,b)} \{ h(T); \,\,\, \langle T,M\rangle=\was(\mu,\nu) \}$. By optimality of $\Tla$ and $T^*$ for their optimization problems, it holds \[ 
0\leq \langle\Tla,M\rangle- \langle T^*,M\rangle\leq\lambda^{-1}(h(\Tla)-h(T^*));
 \]
 Indeed, since $\Tla$ is the optimum, it attains the minimum and hence \[ 
\langle \Tla, M\rangle -\lambda^{-1}h(\Tla)\leq \langle T,M\rangle -\lambda^{-1}h(T)
 \]
 for any other $T$, including $T^*$. 
 By definition of $\wlam$ and $\was$, the inequalities above can be rewritten as
 \[ 
0\leq \wlam(\mu,\nu)-\was(\mu,\nu)\leq \lambda^{-1}h(T^*)=:c_2\lambda^{-1}
 \]
 which goes to $0$ with speed $\lambda^{-1}$ as $\lambda$ goes to infinity.
\end{proof}

\section{Proofs on  differential properties and formula of the gradient}\label{sec:appdiff}
In this section we go over all the details of the proofs sketched in section \ref{sec:differential-properties}.
\cinftysmooth*
\begin{proof}
Let us show the proof for $\wtilde_\lambda$ first. We organize it in three steps:\\
\textit{Step 1. $\wtilde_\lambda$ is smooth when $\Tla$ is:} when considering histograms, $\wtilde_\lambda$ depends on its argument $a$ and $b$ through the optimal coupling $\Tla(a,b)$, being the cost matrix $M$ fixed. Thus, since $\wtilde_\lambda$ is a smooth function of $\Tla$ (being the Frobenius product of $\Tla$ with a constant matrix), showing that $\wtilde_\lambda$ is smooth in $a,b$ amounts to showing that $\Tla$ is smooth.

\noindent \textit{Step 2. $\Tla$ is smooth when $(\ala,\bla)$ is:} 
By Sinkhorn's scaling theorem \cite{sinkhorn1967}, the optimal plan $\Tla$ is characterized as follows 

\begin{equation}\label{eq:appmatrixT}
\Tla=\textnormal{diag}(\textnormal{e}^{\lambda\ala})\textnormal{e}^{-\lambda M}\textnormal{diag}(\textnormal{e}^{\lambda\bla}.)
\end{equation} 
Being the exponential a smooth function, $\Tla(a,b)$ is smooth in $a$ and $b$ if the dual optima $\ala(a,b)$ and $\bla(a,b)$ are. Our goal is then showing smoothness with respect to $a$ and $b$ of the dual optima. 

\noindent \textit{Step 3. $(\ala,\bla)$ is smooth in $a,b$:} this is the most technical part of the proof. First of all, let us stress that one among the $n+m$ rows/columns constraints of $\Pi(a,b)$ is \textit{redundant}: the standard dual problem recalled in \Cref{functL} has an extra dual variable, and this degree of freedom is clear noticing that if $(\alpha,\beta)$ is feasible, than the pair $(\alpha+t\mathbbm{1}_n,\beta -t \mathbbm{1}_m)$ is also feasible. In the following, we get rid of the redundancy  removing one of the dual variables. Hence, let us set \[ 
\mathcal{L}(a,b;\alpha,\beta)=-\alpha^\top\,a-\beta^\top\,\crop{b}+\sum_{i,j=1}^{n,m-1}\,\frac{\textnormal{e}^{-\lambda(M_{ij}-\alpha_i-\beta_j)}}{\lambda},
\]
where $\crop{b}$ corresponds to $b$ with the last element removed.\\
\noindent To avoid cumbersome notation, from now on we denote $x=(a,b)$ and $\gamma=(\alpha,\beta)$. The function $\mathcal{L}$ is smooth and strictly convex in  $\gamma$: hence, for every fixed $x$ in the interior of $ \Delta_n\times \Delta_n$ there exist $\gamma^*(x)$  such that $\mathcal{L}(x;\gamma^*(x))=\min_{\gamma}\mathcal{L}(x;\gamma)$. We now fix $x_0$ and show that $x\mapsto \gamma^*(x)$ is $\textnormal{C}^k$ on a neighbourhood of $x_0$.
Set $\Psi(x;\gamma):=\nabla_{\gamma}\mathcal{L}(x;\gamma)$; the smoothness of $\mathcal{L}$  ensures that $\Psi\in \textnormal{C}^k$. Fix $(x_0;\gamma_0)$ such that $\Psi(x_0;\gamma_0)=0$. Since $\nabla_\gamma \Psi(x;\gamma)=\nabla^2_\gamma \mathcal{L}(x;\gamma)$ and $\mathcal{L}$ is strictly convex, $\nabla_\gamma \Psi(x_0;\gamma_0)$ is invertible. Then, by the implicit function theorem, there exist a subset $U_{x_0}\subset \Delta_n\times \Delta_n$ and a function $\phi:U_{x_0}\rightarrow \Delta_n\times \Delta_n$ such that  \begin{itemize}
\item[i)]$\phi(x_0)=\gamma_0$
\item[ii)]$\Psi(x,\phi(x))=0,\qquad \forall x\in U_{x_0}$
\item[iii)]$\phi \in \textnormal{C}^k(U_{x_0})$.
\end{itemize}
For each $x$ in $U_{x_0}$, since $\phi(x)$ is a stationary point for $\mathcal{L}$ and $\mathcal{L}$ is strictly convex, then $\phi(x)=\gamma^*(x)$, which is- recalling the notation set before- $(\ala,\bla)$. By  a standard covering argument, $(\ala,\bla)$ is $\textnormal{C}^k$ on the interior of $\Delta_n\times \Delta_n$. As this holds true for any $k$, the optima $(\ala,\bla)$, and hence $\wtilde_\lambda$, are $\textnormal{C}^{\infty}$ on the interior of $\Delta_n\times \Delta_n$.\\

\noindent Let us now focus on the smoothness of $\wlambda$. Note that when $a,b$ belong to the interior of the simplex, all components are strictly positive. From the characterization of $\Tla$ recalled in \Cref{eq:appmatrixT}, we know ${\Tla}_{ij}>0$ for any $i,j=1\dots n,m$. Then, since the logarithm is a smooth function of $\Tla$, the term $\lambda^{-1}h(\Tla)$ is smooth in $a$ and $b$. This fact combined with the first part of the proof shows the smoothness of  $\wlambda(a,b)=\langle \Tla, M\rangle -\lambda^{-1} h(\Tla)$.
\end{proof}
With a similar procedure, the implicit function theorem provides a formula for the gradient of sharp Sinkhorn distance.
\gradient*
\begin{proof}
Let us adopt the same notation as in the previous proof. Since $\Psi=\nabla_{(\alpha,\beta)}\mathcal{L}$, by a direct computation, $\Psi$ can be written as \[ 
\Psi(a,b;\alpha,\beta)= 
\begin{pmatrix} 
a-C\mathbbm{1} \\
b-C^\top\mathbbm{1}
\end{pmatrix},
\] 
where $C$ is the $n\times m-1$ matrix given by $\textnormal{diag}(\textnormal{e}^{\lambda\ala})\textnormal{e}^{\lambda\crop{M}}\textnormal{diag}(\textnormal{e}^{\lambda\bla})$ and $\crop{M}$ is the matrix $M$ with $m^{th}$ column removed. In the following, we keep track of the dependence on $a$ only. Being $\Psi$ the gradient of $\mathcal{L}$, and $\gamma^*(a)=(\ala(a),\bla(a))$ a stationary point, we have \begin{equation}\label{startingpoint}
\Psi(a;\gamma^*(a))=0.
\end{equation}
For the sake of clarity, notice that: \begin{itemize}
\item[i)] $a\in \mathbb{R}^n$;
\item[ii)]$\mathcal{L}: \mathbb{R}^{n} \times \mathbb{R}^{n} \times \mathbb{R}^{m-1}\longrightarrow \mathbb{R}$, as we are considering it is a function of $a$, $\alpha$ , $\beta$;
\item[iii)] $\Psi(a,\gamma(a))=\nabla_{\alpha,\beta}\mathcal{L}(a,\gamma(a)) \in \mathbb{R}^{n+m-1 \times 1}$;
\item[iv)]  $\ala:\mathbb{R}^{n}\rightarrow \mathbb{R}^n$, $\bla:\mathbb{R}^{n}\rightarrow \mathbb{R}^{m-1}$, thus $\gamma^*: \mathbb{R}^n\rightarrow \mathbb{R}^n\times \mathbb{R}^{m-1}$.
\end{itemize}
Our goal is to derive $\nabla_a \gamma^*(a)$: by matrix differentiation rules \cite{kollo2006advanced} and \Cref{startingpoint}, 
\begin{equation}\label{grad1}
\nabla_a\Psi(a,\gamma^*(a))=\nabla_1 \Psi(a,\gamma^*(a))+\nabla_a \gamma^*(a)\nabla_2\Psi(a,\gamma^*(a))=0.
\end{equation}

Let us analyse each term: $\nabla_1 \Psi(a,\gamma^*(a))=[\textnormal{I}_{n},\mathbf{0}_{n,m-1}]$ is $n\times n\!+\!m\!-\!1$ matrix with identity and zeros block, and $\nabla_2 \Psi(a,\gamma^*(a))=\nabla^2_{\alpha,\beta}\mathcal{L}(a,\gamma^*(a))=:H$ is the  Hessian of $\mathcal{L}$ evaluated at $(a,\gamma^*(a))$, which is a $n\!+\!m\!-\!1\times n\!+\!m\!-\!1$ matrix.
Together with \Cref{grad1}, this yields \[ 
\nabla_a \gamma^*(a)=[\nabla_a\ala(a),\nabla_a \bla(a)]=-D H^{-1}.
\]
For the sake of clarity, note that $\nabla_a\ala(a)$ and $\nabla_a \bla(a)$ contains the gradients of the components as columns, i.e. \begin{align*}
&\nabla_a \ala=
\begin{pmatrix} 
\nabla_a {\ala}_1, & \nabla_a {\ala}_2, & \dots, & \nabla_a {\ala}_n
\end{pmatrix}\\
&\nabla_a \bla=
\begin{pmatrix} 
\nabla_a {\bla}_1, & \nabla_a {\bla}_2, & \dots, & \nabla_a {\bla}_{m-1}
\end{pmatrix}.
\end{align*}
Now, since $\wtilde_\lambda(a,b)=\langle \Tla,M\rangle$ and $\Tla$ corresponds to \Cref{eq:appmatrixT} a straightforward computation shows that \[
\nabla_a \wtilde_\lambda(a,b)=\sum_{i,j=1}^{n,m}\nabla_a {\Tla}_{ij} M_{ij}=\lambda\sum_{i,j=1}^{n,m} {\Tla}_{ij}M_{ij}\nabla_a {\ala}_i+\lambda\sum_{i,j=1}^{n,m-1} {\Tla}_{ij}M_{ij}\nabla_a {\bla}_j.
 \]
 Setting $L:=\Tla\odot M$, then the formula above can be rewritten as \[ 
 \nabla_a \wtilde_\lambda(a,b)=\lambda\sum_{i}^{n}\nabla_a{\ala}_i\sum_{j=1}^{m}L_{ij}+\lambda\sum_{j=1}^{m-1}\nabla_a {\bla}_j\sum_{i=1}^{n}L_{ij},
 \]
 which is exactly \[ 
 \nabla_a \wtilde_\lambda(a,b)=\lambda(\nabla_a\ala L \mathbbm{1}_m +\nabla_a\bla \crop{L}^\top \mathbbm{1}_n).
 \]
Since by definition, the gradient belongs to the tangent space of the domain, and $a\in \Delta_n$, we project on the tangent space of the simplex, recovering $\proj_{\textnormal{T}\Delta_n}\lambda(\nabla_a\ala L \mathbbm{1}_m +\nabla_a\bla \crop{L}^\top \mathbbm{1}_n).$
\end{proof}

\subsection{Massaging the gradient to get an algorithmic-friendly form}
In the proof of theorem  \ref{gradientwtilde} we have derived a formula for the gradient of Sinkhorn distance. In this section we further manipulate it in order to obtain an algorithmic friendly expression that also points out some interesting bits that were hidden in the formula above. All the notation has already been introduced: from now on, we will drop the $\lambda$ and denote the optimal plan by $T$ to make the notation neater.\\

\noindent An explicit computation of the second derivatives of $\mathcal{L}$ with respect to $\alpha_i$ and $\beta_j$ for $i=1,\dots,n$ and $j=1,\dots,m-1$ leads to the following identity \[
H=\begin{pmatrix} 
\textnormal{diag}(T\mathbbm{1}) & \crop{T} \\
\crop{T}^{\top} &\textnormal{diag}(\crop{T}^{\top}\mathbbm{1}) 
\end{pmatrix}=\begin{pmatrix} 
\textnormal{diag}(a) & \crop{T} \\
\crop{T}^{\top} &\textnormal{diag}(\crop{b}) 
\end{pmatrix}.
 \] 
 That is, $H$ is a block matrix and each block can be expressed in terms of the plan $T$. The block structure can be exploited when it comes to compute the inverse: we have shown that the gradient of the dual potentials is given by \[ 
[\nabla_a\ala,\nabla_a\bla]=-DH^{-1}, \qquad D=[\textnormal{I}_{n}, \mathbf{0}_{n,m-1}]. 
 \]
 Now,  the inverse of a block matrix is again a block matrix, say \[
 H^{-1}=\begin{pmatrix} 
K_1 & K_2\\
K_3 & K_4
\end{pmatrix}.
  \] 
Then, $[\nabla_a\ala,\nabla_a\bla]=-[K_1, K_2]$. By the formula of the block inverse, setting \[
\mathcal{K}= \textnormal{diag}(T\mathbbm{1}) -  \crop{T} \textnormal{diag}(\crop{T}^{\top}\mathbbm{1})^{-1} \crop{T}^{\top},
\] 
the blocks $K_1$ and $K_2$ are given by
\[
K_1= \mathcal{K}^{-1}, \qquad K_2= -\mathcal{K}^{-1} \crop{T} \textnormal{diag}({\crop{T}}^{\top}\mathbbm{1})^{-1}.
\]
Note that $\mathcal{K}$ is symmetric and so its inverse. Now, we can rewrite $\lambda (\nabla_a\ala L \mathbbm{1}_m +\nabla_a\bla \crop{L}^\top \mathbbm{1}_n)$, with $L=T\odot M$, as \[ 
\lambda\big(-\mathcal{K}^{-1}S \mathbbm{1}_m+\mathcal{K}^{-1} \crop{T} \textnormal{diag}(\crop{T}^{\top}\mathbbm{1})^{-1}\crop{L}^\top \mathbbm{1}_n\big)
\]
and we  conclude that 
\[ 
\nabla_a\wtilde_\lambda(a,b)=\lambda \cdot\textnormal{solve}(\mathcal{K}, -L \mathbbm{1}_m +  \crop{T} \textnormal{diag}(\crop{T}^{\top}\mathbbm{1})^{-1}\crop{L}^\top \mathbbm{1}_n ).\]

\paragraph{Comment on Remark \ref{rem:computation-comparison}:} In the recent work \cite{AltschulerWR17}, it has been shown that Sinkhorn-Knopp algorithm outputs a matrix $\Tla$ whose distance $\norm{\Tla\mathbbm{1}-a}_1+\norm{\Tla^\top\mathbbm{1}-b}_1$ from the transport polytope $\Pi(a,b)$ is smaller than $\epsilon$ in $O(\epsilon^{-2}\log(s/\ell))$ iterations, where $s=\sum_{ij}\textnormal{e}^{-\lambda M_{ij}}$ and $\ell=\min_{ij}\textnormal{e}^{-\lambda M_{ij}}$. Let us denote by $M_{\textnormal{max}}$ and $M_{\textnormal{min}}$ the maximum and minimum elements of $M$ respectively. Then, \[ 
\frac{s}{\ell}=\sum_{ij}\textnormal{e}^{-\lambda(M_{ij}-M_{\textnormal{max}})}\geq \textnormal{e}^{-\lambda(M_{\textnormal{min}}-M_{\textnormal{max}})}\geq 1.
\]
This yields the lower bound \[ 
\log\Big(\frac{s}{\ell}\Big)\geq c \lambda 
\]
where $c$ is a constant independent of $\lambda$. 
We can then conclude that Sinkhorn-Knopp algorithm returns a matrix $\Tla$ such that \[ 
\langle \Tla,M\rangle \leq \was(a,b) + \epsilon
\]
in $O(n m\epsilon^{-2} M_{\textnormal{max}}^2\lambda )$.

\section{Proofs in \ref{learning}: Learning with Sinkhorn Loss Functions}\label{sec:applearning}

We recall the main definition and tools from \cite{CilibertoRR16} needed to fully understand what discussed in section \ref{learning}. 
The structured prediction estimator recalled in \Cref{eq:estimator} is derived in \cite{CilibertoRR16} for a large class of loss functions $\loss:\yy\times \yy\rightarrow \R$ that are referred to as \textit{Structure Encoding Loss Functions} (SELF) and satisfy the following assumption: 
\begin{definition}[SELF]\label{def:SELF}Let $\yy$ be a set. A function $\loss:\yy \times \yy \rightarrow \R$ is a \emph{Structure Encoding Loss Function} (SELF) if there exists a separable Hilbert space $\hy$ with inner product $\langle \cdot, \cdot\rangle_{\hy}$, a continuous map $\psi:\yy\rightarrow \hy$ and a bounded linear operator $V:\hy \rightarrow \hy$ such that \eqal{\label{SELF}
\loss(y,y^\prime)=\langle \psi(y),V\psi(y^\prime)\rangle_{\hy} \qquad y,\,y^\prime\in \hy.}
\end{definition}
\noindent While in \cite{CilibertoRR16} it has been observed that a wide range of commonly used loss functions are SELF, no such result was known  for Sinkhorn loss. This work also provides an answer to this question. In this direction, let us show a first result on smooth function, which will be a key tool in the rest of the analysis. Note that we will use the  notation $H^{r}$ for the Sobolev space $W_2^{r}$.  

\begin{theorem}(Smooth functions are SELF)\label{thm:smooth-self}
Let $\yy= \Delta_n$. Any function $\loss:\yy\times \yy \rightarrow \R$ such that $\loss \in\textnormal{C}^{\infty}(\yy\times \yy)$ is SELF.
\end{theorem}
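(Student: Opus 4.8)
The plan is to recognize the statement as an instance of an \emph{implicit loss embedding}: I will show that any $\loss\in\textnormal{C}^{\infty}(\yy\times\yy)$ lies in the Hilbert tensor product $\hh\otimes\hh$ of a suitable reproducing kernel Hilbert space $\hh$ on $\yy$ with itself, and then read off the SELF triple $(\hy,\psi,V)$ directly from that membership. Fix any $r>(n-1)/2$ (for definiteness one may take $r=(n+1)/2$, matching the space $W_2^{(n+1)/2}(\yy)$ used later in the paper) and set $\hh=H^{r}(\yy)$, the $L^2$-Sobolev space of smoothness $r$ on $\yy=\Delta_n$. Identifying $\Delta_n$ with the compact $(n-1)$-dimensional Lipschitz domain $\{p\in\R^{n-1}_{+}:\ \sum_i p_i\le 1\}$ (dropping the last coordinate), the condition $r>(n-1)/2$ and the Sobolev embedding theorem make $\hh$ a separable RKHS whose reproducing kernel $h$ is bounded and continuous on $\yy\times\yy$; hence its canonical feature map $\psi:\yy\to\hh$, $\psi(y)=h(y,\cdot)$, is continuous.

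\textbf{From tensor membership to SELF.} I will use the standard isometry between the Hilbert tensor product $\hh\otimes\hh$ and the space $\textnormal{HS}(\hh)$ of Hilbert--Schmidt operators on $\hh$, under which $u\otimes v$ corresponds to the rank-one operator $g\mapsto\langle v,g\rangle_{\hh}\,u$. If $F\in\hh\otimes\hh$ has Schmidt decomposition $F=\sum_k\sigma_k\,u_k\otimes v_k$, with $\sum_k\sigma_k^2=\|F\|_{\hh\otimes\hh}^2<\infty$, set $V=\sum_k\sigma_k\langle v_k,\cdot\rangle_{\hh}\,u_k$; this is Hilbert--Schmidt, hence bounded, with $\|V\|\le\|V\|_{\textnormal{HS}}<\infty$. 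Viewing $F$ as an element of the RKHS $\hh\otimes\hh$ on $\yy\times\yy$ (whose kernel is $(y_1,y_2),(y_1',y_2')\mapsto h(y_1,y_1')h(y_2,y_2')$), the reproducing property yields, pointwise, $F(y,y')=\sum_k\sigma_k u_k(y)v_k(y')=\langle\psi(y),V\psi(y')\rangle_{\hh}$. Thus \emph{every} $F\in\hh\otimes\hh$ is SELF with the triple $(\hh,\psi,V)$, and it only remains to prove $\loss\in\hh\otimes\hh$.

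\textbf{Smooth functions lie in $\hh\otimes\hh$.} I will establish the inclusion $\textnormal{C}^{\infty}(\yy\times\yy)\subseteq\hh\otimes\hh$ by identifying $\hh\otimes\hh$ with the Sobolev space $H^{(r,r)}_{\textnormal{mix}}(\yy\times\yy)$ of dominating mixed smoothness. First, since $\loss$ is smooth on the compact set $\yy\times\yy$, a smooth (Whitney/Stein-type) extension produces $\tilde\loss\in\textnormal{C}^{\infty}_c(\R^{2(n-1)})$ agreeing with $\loss$ on $\yy\times\yy$, so in particular $\tilde\loss\in H^{2r}(\R^{2(n-1)})$. Second, on the Fourier side the elementary bound $(1+|\xi|^2)^{r}(1+|\eta|^2)^{r}\le c\,(1+|\xi|^2+|\eta|^2)^{2r}$ gives a continuous inclusion $H^{2r}(\R^{d}\times\R^{d})\hookrightarrow H^{(r,r)}_{\textnormal{mix}}(\R^{d}\times\R^{d})=H^{r}(\R^{d})\otimes H^{r}(\R^{d})$ with $d=n-1$. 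Third, restricting back to $\yy\times\yy$ and using that $\yy$ is an extension domain for $H^{r}$ (so the restriction map is onto $H^{r}(\yy)$ in each variable) yields $\loss\in H^{r}(\yy)\otimes H^{r}(\yy)=\hh\otimes\hh$, which completes the argument.

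\textbf{Expected obstacle.} The nontrivial part is functional-analytic bookkeeping rather than anything deep: one must treat $\Delta_n$ carefully as a compact Lipschitz (hence extension) domain even though it is lower-dimensional and has corners, so that the Sobolev embedding, the smooth extension, and the tensor-product description of Sobolev spaces all apply simultaneously; and one must justify the identification $H^{(r,r)}_{\textnormal{mix}}(\yy\times\yy)=H^{r}(\yy)\otimes H^{r}(\yy)$, which holds with equivalent rather than identical norms. Everything is very slack, however: any finite $r>(n-1)/2$ suffices and only finitely many derivatives of $\loss$ are ever invoked, so the hypothesis $\loss\in\textnormal{C}^{\infty}$ is far stronger than needed.
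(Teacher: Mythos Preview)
Your proof is correct and follows essentially the same route as the paper: both arguments show $\loss\in H^{r}(\yy)\otimes H^{r}(\yy)$ for $r=(n+1)/2$, identify this tensor product with the Hilbert--Schmidt operators on $H^{r}(\yy)$, and then read off the SELF triple $(\hh,\psi,V)=(H^{r}(\yy),\,y\mapsto h(y,\cdot),\,V)$ from the reproducing property. The only difference is presentational: you justify the inclusion $\textnormal{C}^{\infty}(\yy\times\yy)\subset H^{r}(\yy)\otimes H^{r}(\yy)$ explicitly via Whitney extension and the Fourier bound $(1+|\xi|^2)^{r}(1+|\eta|^2)^{r}\le(1+|\xi|^2+|\eta|^2)^{2r}$, while the paper asserts it in one line.
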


\begin{proof}
By assumption $\loss\in\textnormal{C}^{\infty}(\yy\times \yy)$. Since $\yy$ is compact, \eqal{
\textnormal{C}^\infty(\yy\times \yy)=\textnormal{C}^\infty(\yy) \otimes \textnormal{C}^\infty(\yy) \subset H^r(\yy)\otimes H^r(\yy),
}
for $r=(n+1)/2$.
The Sobolev space $H^r(\yy)$ is a Reproducing Kernel Hilbert Space (RKHS) \cite{berlinet2011reproducing} and we denote by $\mathsf{k}_y=\mathsf{k}(y,\cdot)\in H^r(\yy)$ the reproducing kernel. The product space $H^r \otimes H^r$ is also an RKHS with reproducing kernel $\mathsf{K}((y_1,y_2),(y_1^\prime,y_2^\prime))=\mathsf{k}(y_1,y_1^\prime)\mathsf{k}(y_2,y_2^\prime)$, i.e. in general $\mathsf{K}_{y,y^\prime}=\mathsf{k}_y\otimes \mathsf{k}_{y^\prime}$. 
Since $\loss \in H^r \otimes H^r$, by reproducing property there exists a function $V\in H^r \otimes H^r$ such that \eqals{
\loss(y,y^\prime)=\langle V,\mathsf{k}_y\otimes \mathsf{k}_{y^\prime}\rangle_{H^s\otimes H^s}.
}
By the isometric isomorphism $H^r \otimes H^r \cong \textnormal{HS}(H^r,H^r)$ \cite{moretti2013spectral}, with $\textnormal{HS}(H^r,H^r)$ the space of Hilbert-Schmidt operators from $H^r$ to itself, it holds \eqal{
\loss(y,y^\prime)=\langle V,\mathsf{k}_y\otimes \mathsf{k}_{y^\prime}\rangle_{H^r\otimes H^r}=\langle V, \mathsf{k}_y\otimes \mathsf{k}_{y^\prime}\rangle_{\textnormal{HS}}=\textnormal{Tr}(V^*\mathsf{k}_y\otimes \mathsf{k}_{y^\prime})=\langle \mathsf{k}_{y^\prime},V^*\mathsf{k}_y\rangle_{H^r},  
}
where $V^*$ is the adjoint operator of $V$. To meet the conditions of definition \ref{def:SELF} it remains to show that $V^*$ and $\mathsf{k}_y$ are bounded. But $\mathsf{k}_y$ is bounded in $H^r$ for any $y\in \yy$ by definition of reproducing kernel and the operator norm $\norm{V^*}$ is bounded from above by the Hilbert-Schmidt norm $\norm{V}_{\textnormal{HS}}$ which is trivially bounded since $V\in \textnormal{HS}(H^r,H^r)$. 
\end{proof}

\begin{corollary}
The regularized and sharp Sinkhorn losses $\wlambda$ and $\wtildela:\Delta^\epsilon_n\times \Delta^\epsilon_n\rightarrow \R$ are SELF.
\end{corollary}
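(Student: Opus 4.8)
The plan is to read the corollary off as a one-line consequence of the two results already in hand: \Cref{smooth_wtilde}, which says that $\wlambda$ and $\wtildela$ are $\textnormal{C}^\infty$ in the interior of $\Delta_n\times\Delta_n$, and \Cref{thm:smooth-self}, which says that any $\textnormal{C}^\infty$ loss on a compact histogram domain is SELF. The one subtlety to dispatch is that these two statements live on slightly different domains: smoothness of the Sinkhorn losses is guaranteed only in the \emph{interior} of the simplex, since the dual potentials and the optimal plan $\Tla$ degenerate when some coordinate of $a$ or $b$ vanishes, whereas \Cref{thm:smooth-self} requires a function that is smooth on its entire (compact) domain. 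Restricting the output space to $\Delta_n^\epsilon$ is exactly what reconciles the two.

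First I would note that, for any fixed $\epsilon>0$, the set $\Delta_n^\epsilon\times\Delta_n^\epsilon$ is compact and is contained in the interior of $\Delta_n\times\Delta_n$, because each coordinate of each point is bounded below by $\epsilon$. Hence, by \Cref{smooth_wtilde}, the restrictions of $\wlambda$ and $\wtildela$ to $\Delta_n^\epsilon\times\Delta_n^\epsilon$ belong to $\textnormal{C}^\infty(\Delta_n^\epsilon\times\Delta_n^\epsilon)$.

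Second, I would invoke \Cref{thm:smooth-self} with $\yy=\Delta_n^\epsilon$: its proof uses only that the domain is compact and that $H^{r}(\yy)$ is an RKHS for $r=(n+1)/2$, both of which hold verbatim for the compact convex polytope $\Delta_n^\epsilon$ (equivalently, one can first extend each restricted loss to a function in $\textnormal{C}^\infty(\Delta_n\times\Delta_n)$ by a smooth cutoff and then apply \Cref{thm:smooth-self} as literally stated, which is harmless since the SELF property is a statement about values on $\Delta_n^\epsilon\times\Delta_n^\epsilon$ only). This yields that both $\wlambda$ and $\wtildela$ are SELF on $\Delta_n^\epsilon\times\Delta_n^\epsilon$, as claimed. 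I do not expect any genuine obstacle here: the whole content is bookkeeping, namely checking that $\Delta_n^\epsilon$ lies strictly inside the region where \Cref{smooth_wtilde} applies and that \Cref{thm:smooth-self} transfers unchanged from $\Delta_n$ to $\Delta_n^\epsilon$.
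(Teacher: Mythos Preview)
Your proposal is correct and follows exactly the paper's approach: note that $\Delta_n^\epsilon$ is compact and lies in the interior of $\Delta_n$, invoke \Cref{smooth_wtilde} to get $\textnormal{C}^\infty$ smoothness there, and then apply \Cref{thm:smooth-self}. The only difference is that you spell out the domain bookkeeping (and the optional cutoff-extension alternative) more carefully than the paper's one-sentence proof.
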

\begin{proof}
Since $\Delta^\epsilon_n\subset \Delta_n$ is compact and $\wlambda$, $\wtildela$ are $\textnormal{C}^\infty$ in the interior on $\Delta_n\times \Delta_n$ by Thm. \ref{smooth_wtilde}, a direct application of the result above shows that $\wlambda$ and $\wtildela$ are SELF. 
\end{proof}

\noindent Summing up these elements, the proof of Thm. \ref{thm:consistency} easily follows: 
\TUniversal*
\begin{proof}
Since $\wlambda$, $\wtildela$ are SELF function and $\Delta^\epsilon_n$ is compact, the result follows from Thm. 4 in \cite{CilibertoRR16}.
\end{proof}

We conclude the section with some comments on  Thm. \ref{thm:rates-formal} and its proof.
We have shown that $\wlambda$ and $\wtildela$ are SELF and can be written as  \eqal{
\wtildela(y,y^\prime)=\langle \mathsf{k}_y,V\mathsf{k}_{y^\prime}\rangle_{H^r(\Delta^\epsilon_n)}}
with $\mathsf{k}$ the reproducing kernel of the Sobolev space $H^r(\Delta^\epsilon_n)$. 
\TRates*
\begin{proof}
The proof substantially takes advantage of the fact that $\wlambda$ and $\wtildela$ are SELF and inherits the generalization bounds proved in Thm. 5 in \cite{CilibertoRR16}. 
\end{proof}

\begin{remark} A relevant question is whether the  \wass\ distance could be similarly framed in the setting of structured prediction. However, the argument used to address Sinkhorn distances relies on their smoothness properties and cannot be extended to the \wass\ distance, which is not differentiable. A completely different approach may still be successful and we will investigate this question in future work.
\end{remark}

\section{Experiment on MNIST}
This last section is a short supplement to section \ref{Exp}. We present a small experiment on the MNIST dataset that has the same flavour as the experiment on GoogleQuickDraw dataset but adresses a more specific target: to evaluate better the quality of the prediction  rather than the overall quality of the reconstructed image, we train the SVM classifier trained on a separate dataset made of $2000$ examples of lower halves of digits $1,2,5,8,9$. Since the classificator is trained on lower halves only, we have selected a subset of digits with clearly diverse shapes, to disregard any legitimate vagueness. This means that any classification errors will be due to a poor prediction of the lower half.

We performed the reconstruction with both $\wlambda$ and $\wtildela$ loss. We tested the performance of the two estimators on $100$ examples. \Cref{TabMnist} reports the {\em performance} of the two estimators, as follows:\begin{itemize}
\item[i)] the terms on the diagonal presents the number of misclassification of the lower half predicted with $\wlambda$ and $\wtildela$ losses;
\item[ii)] the number on the upper diagonal represents the number of errors occurred in the classification of the prediction with $\wlambda$ on those examples that were correctly classified when reconstructed with $\wtildela$;

\item[iii)] conversely, the number on the lower diagonal represents the number of errors occurred in the prediction with $\wtildela$ on those examples that were correctly classified when reconstructed with $\wlambda$.
\end{itemize} 
To be more precise, denote by $\textnormal{L}(\wlambda)$ the vector with labels predicted by the classifier when tested on the halves of digits predicted with $\wlambda$ loss and analogously $\textnormal{L}(\wtildela)$ the vector with labels given by the classifier tested on the halves of images predicted with $\wtildela$ loss. Vector $\textnormal{L}$ is the vector with the true labels of the test set.
Consider two vectors $\tilde{e}^\lambda\in \{0,1\}^{100}$ and $e^\lambda\in \{0,1\}^{100}$ defined as follows: \[
{\tilde{e}^\lambda}_i=\begin{cases}
0 \qquad&\textnormal{if } L_i=\textnormal{L}(\wlambda)_i\\
1 &\textnormal{otherwise}
\end{cases} \qquad {e^\lambda}_i=\begin{cases}
0 \qquad&\textnormal{if } L_i=\textnormal{L}(\wtildela)_i\\
1 &\textnormal{otherwise}.
\end{cases}
\]
Table in Fig. \ref{TabMnist} corresponds to  \[
\begin{pmatrix}
\sum_i  \tilde{e}^{\lambda}_i & \sum_i \tilde{e}^{\lambda}_i (1-e^{\lambda}_i) \\[10pt]
\sum_i e^{\lambda}_i (1-\tilde{e}^{\lambda}_i) & \sum_i  e^{\lambda}_i.
\end{pmatrix}.
\]
What we observed is the following: since the classifier was trained and tested on the lower halves only, the blurriness in the reconstruction performed with $\wlambda$ played a substantial role in the misclassification on digit $5$ in favour of digit $8$.
On the other hand, the sharpness of the reconstruction with $\wtildela$ is a major advantage for the correct classification.
\begin{figure}[t]
\centering
\CenterFloatBoxes
\begin{floatrow}
\begin{minipage}[t]{0.5\textwidth}
\hspace*{0.01\textwidth}
    \begin{tabular}{lll}
      $\#$err  & $\wlambda$      & $\wtilde_\lambda$ \\
    \midrule
   $\wlambda$ & $16$  &  $11$     \\
   $\wtilde_\lambda$ &   $1$  &  $6$\\
    \bottomrule
  \end{tabular}

\end{minipage}
\begin{minipage}[t]{0.6\textwidth}
\ffigbox{%
\hspace*{-0.5\textwidth}
\includegraphics[width=0.6\textwidth]{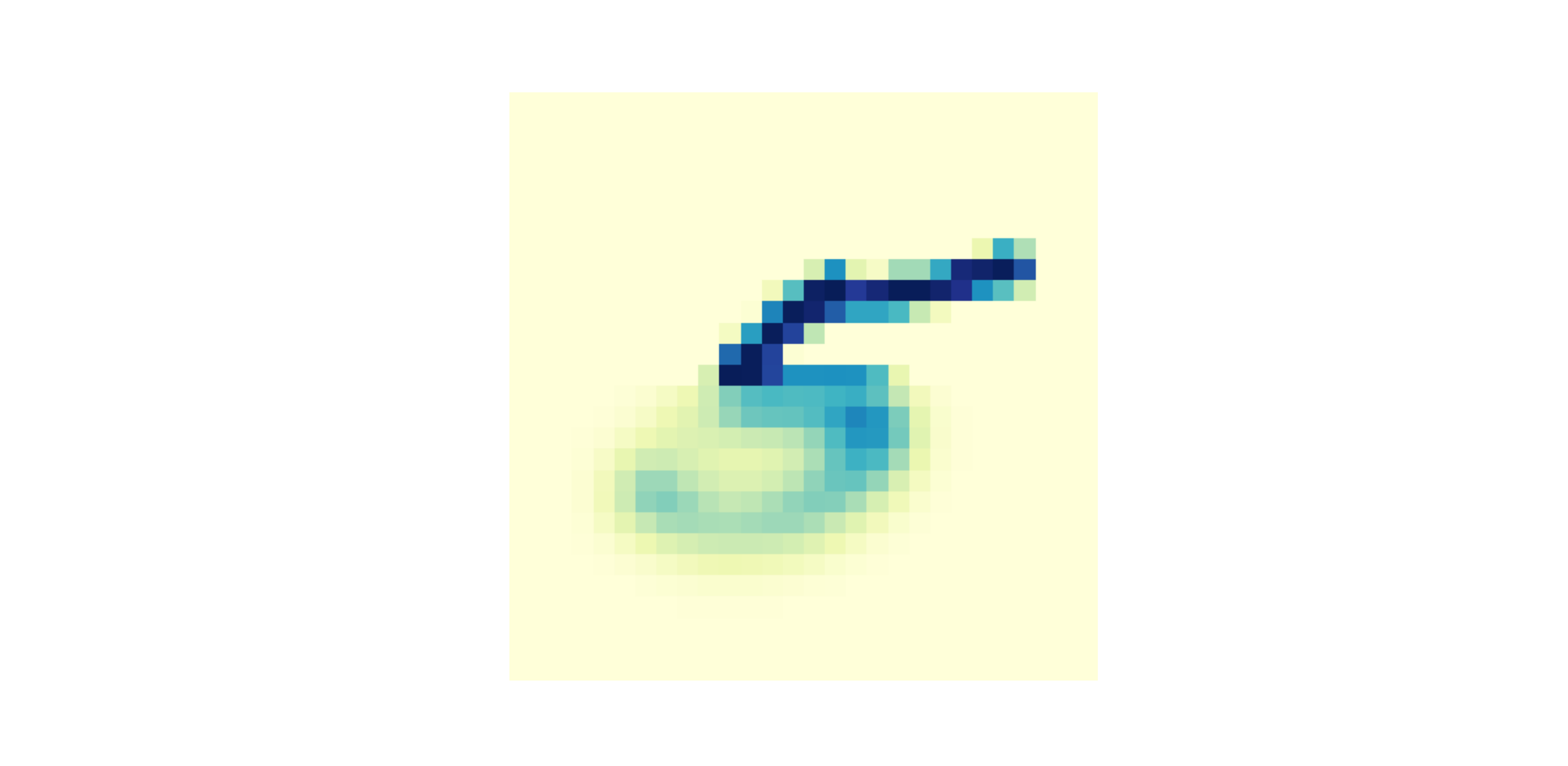}%
\includegraphics[width=0.6\textwidth]{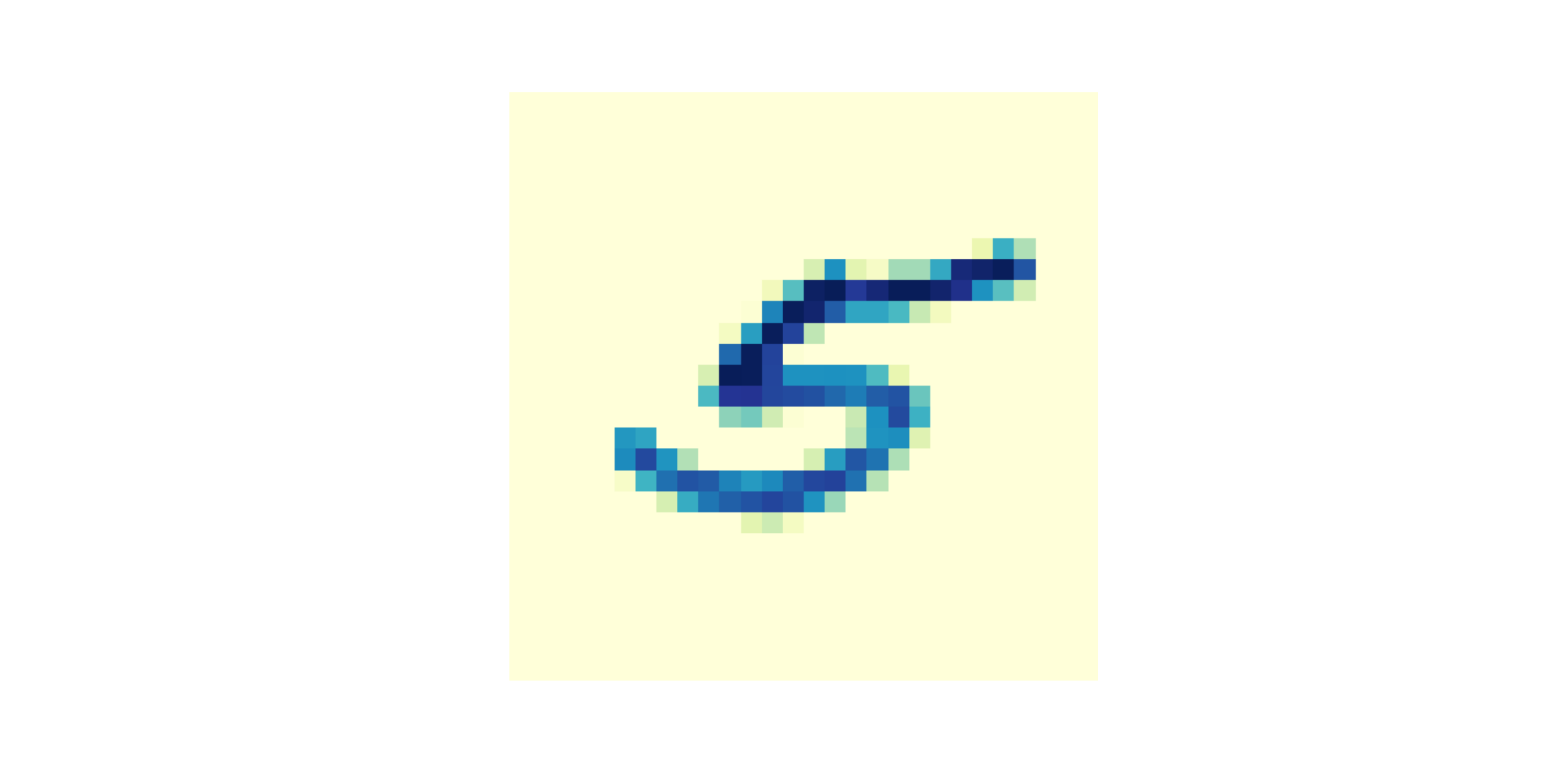}
}{}
\end{minipage}
\caption{(Right) Relative error (see text) for the Sinkhorn estimators on the digit reconstruction problem. (Left) Sample predictions for egularized (First image) and sharp Sinkhorn estimators. \label{TabMnist}}
\end{floatrow}
\end{figure}
\end{document}